\documentclass{article}

\usepackage[final]{neurips_2025}

\usepackage[utf8]{inputenc} %
\usepackage[T1]{fontenc}    %
\usepackage{hyperref}       %
\usepackage{url}            %
\usepackage{booktabs}       %
\usepackage{amsfonts}       %
\usepackage{nicefrac}       %
\usepackage{microtype}      %

\usepackage{float}
\usepackage{minitoc}
\usepackage{titletoc}
\makeatletter
\g@addto@macro\normalsize{%
\setlength\abovedisplayskip{4pt}
\setlength\belowdisplayskip{4pt}
\setlength\abovedisplayshortskip{4pt}
\setlength\belowdisplayshortskip{4pt}
}
\makeatother

\usepackage{gensymb}
\usepackage{placeins}

\usepackage{booktabs}

\usepackage{lipsum}  
\usepackage{wrapfig}
\usepackage{nicefrac}
\usepackage{hyperref}
\hypersetup{
    colorlinks=true,
    linkcolor=blue,
    filecolor=magenta,      
    urlcolor=cyan
}
\usepackage{mathtools}
\usepackage{amsmath,amsfonts}
\usepackage{amssymb}
\usepackage[ruled,vlined]{algorithm2e}
\usepackage{algorithmic}
\usepackage{makecell}
\usepackage{diagbox}
\usepackage{multirow}
\usepackage{textgreek}

\usepackage{listings}

\usepackage{caption}
\usepackage{subcaption}

\usepackage{amsthm}
\newtheorem{theorem}{Theorem}

\usepackage[nameinlink,capitalize]{cleveref}

\crefname{section}{\S}{\S}
\Crefname{section}{\S}{\S}
\crefname{figure}{Fig.}{Figs.}
\Crefname{figure}{Fig.}{Figs.}
\crefname{table}{Tab.}{Tabs.}
\Crefname{table}{Tab.}{Tabs.}
\crefname{appendix}{App.}{Apps.}
\Crefname{appendix}{App.}{Apps.}
\crefname{theorem}{Thm.}{Thms.}
\Crefname{theorem}{Thm.}{Thms.}
\crefname{proposition}{Prop.}{Props.}
\Crefname{proposition}{Prop.}{Props.}

\crefname{algorithm}{Alg.}{Algs.}
\Crefname{algorithm}{Alg.}{Algs.}
\crefname{assumption}{Asm.}{Asms.}
\Crefname{assumption}{Asm.}{Asms.}
\crefname{mechanism}{Mech.}{Mechs.}
\Crefname{mechanism}{Mech.}{Mechs.}

\crefformat{footnote}{#2\footnotemark[#1]#3}
\crefmultiformat{footnote}{#2\footnotemark[#1]#3}%
{\textsuperscript{,}#2\footnotemark[#1]#3}{\textsuperscript{,}#2\footnotemark[#1]#3}{\textsuperscript{,}#2\footnotemark[#1]#3}

\makeatletter
\newcommand\footnoteref[1]{\protected@xdef\@thefnmark{\ref{#1}}\@footnotemark}
\makeatother

\newcommand{\highlightblock}[1]{\begin{center}\vspace{-0.1cm}\emph{#1}\vspace{-0.1cm}\end{center}}

\newcommand{\myparatightestn}[1]{ \noindent\textbf{{#1}}}

\newcounter{packednmbr}

\newenvironment{packeditemize}{\begin{list}{$\bullet$}{\setlength{\itemsep}{0.5pt}\addtolength{\labelwidth}{-4pt}\setlength{\leftmargin}{\labelwidth}\addtolength{\leftmargin}{-2pt}\setlength{\listparindent}{\parindent}\setlength{\parsep}{1pt}\setlength{\topsep}{0pt}}}{\end{list}}

\NewDocumentCommand{\codeword}{v}{%
\texttt{\textcolor{blue}{#1}}%
}

 \newcommand\blfootnote[1]{%
  \begingroup
  \renewcommand\thefootnote{}\footnote{#1}%
  \addtocounter{footnote}{-1}%
  \endgroup
}

\usepackage{bbold}

\usepackage[table,dvipsnames]{xcolor}

\definecolor{gray0}{gray}{0.9}

\newcommand{\graybox}[1]{\colorbox{gray0}{#1}}

\newcommand{\name}{\textcolor{orange}{\texttt{Latent Zoning Network}}}
\newcommand{\nameshort}{\textcolor{orange}{\texttt{LZN}}}

\newcommand{\bra}[1]{\left( #1 \right)}

\newcommand{\brc}[1]{\left\{ #1 \right\}}

\newcommand{\brn}[1]{\left\lVert #1 \right\rVert}

\newcommand{\supp}[1]{\textrm{Supp}\bra{#1}}

\newcommand{\sample}{x}
\newcommand{\sampley}{y}

\newcommand{\latentcomputationnotation}{C}

\newcommand{\latentcomputation}[1]{\latentcomputationnotation\bra{#1}}

\newcommand{\latent}{z}

\newcommand{\encodernotation}{E_x}
\newcommand{\encoder}[1]{\encodernotation\bra{#1}}
\newcommand{\encoderynotation}{E_y}

\newcommand{\decodernotation}{D_x}
\newcommand{\decoder}[1]{\decodernotation\bra{#1}}

\newcommand{\decoderynotation}{D_y}
\newcommand{\decodery}[1]{\decoderynotation\bra{#1}}

\newcommand{\anchor}{a}

\newcommand{\calN}{\mathcal{N}}

\newcommand{\normaldistribution}[2]{\calN\bra{#1,#2}}
\newcommand{\uniformdistribution}[1]{\textrm{Uniform}\brc{#1}}

\newcommand{\identity}{\mathbf{I}}

\newcommand{\expectation}[2]{\mathbb{E}_{#1}\bra{#2}}

\newcommand{\rf}[1]{\mathrm{FM}_\sample\bra{#1}}
\newcommand{\irf}[1]{\mathrm{IFM}_\sample\bra{#1}}

\newcommand{\labelanchor}{A}

\newcommand{\rflabelnotation}{\mathrm{FM}_\labelanchor}
\newcommand{\rflabel}[1]{\rflabelnotation\bra{#1}}

\newcommand{\prob}[1]{\mathbb{P}\bra{#1}}

\newcommand{\diff}{\mathrm{d}}

\newcommand{\bigO}{\mathcal{O}}

\newcommand{\mapping}{k}

\newcommand{\distance}[1]{d\bra{#1}}

\newcommand{\probassign}[2]{\mathbb{P}\bra{{#1} | {#2}}}

\newcommand{\solversteps}{r}

\newcommand{\solverstepsstart}{u}

\newcommand{\latentalignnotation}{\mathrm{Align}}
\newcommand{\latentalign}[1]{\latentalignnotation\bra{#1}}

\newcommand{\sampleset}{\mathcal{X}}
\newcommand{\sampleyset}{\mathcal{Y}}

\newcommand{\latentdim}{q}
\newcommand{\numclass}{c}

\newcommand{\real}{\mathbb{R}}

\newcommand{\mnist}{{\tt MNIST}}
\newcommand{\cifar}{{\tt CIFAR10}}
\newcommand{\afhqcat}{{\tt AFHQ-Cat}}
\newcommand{\celebahq}{{\tt CelebA-HQ}}
\newcommand{\lsunbedroom}{{\tt LSUN-Bedroom}}
\newcommand{\imagenet}{{\tt ImageNet}}

\newcommand{\threetwo}{($32\times 32$)}
\newcommand{\twofivesix}{($256\times 256$)}

\newcommand{\latentscale}{\alpha}

\title{\name{}: \\A Unified Principle for Generative Modeling, Representation Learning, and Classification}

\author{%
  Zinan Lin\thanks{Correspondence to: Zinan Lin (\url{zinanlin@microsoft.com}).} \\
  Microsoft Research\\
  Redmond, WA, USA \\
  \And
   Enshu Liu \\
  Tsinghua University \\
  Beijing, China \\
  \And
  Xuefei Ning \\
  Tsinghua University \\
  Beijing, China \\
  \And
  Junyi Zhu%
  \\
  KU Leuven \\
  Leuven, Belgium \\
  \And
  Wenyu Wang \\
  Redmond, WA, USA \\
  \And
  Sergey Yekhanin \\
  Microsoft Research\\
  Redmond, WA, USA 
}

\begin{document}

\vspace*{-0.8cm}
\maketitle

\vspace{-0.8cm}
\begin{abstract}
\vspace{-0.2cm}
Generative modeling, representation learning, and classification are three core problems in machine learning (ML), yet their state-of-the-art (SoTA) solutions remain largely disjoint. In this paper, we ask: \emph{Can a unified principle address all three?} Such unification could simplify ML pipelines and foster greater synergy across tasks. We introduce \name{} (\nameshort{}) as a step toward this goal. At its core, \nameshort{} creates a shared Gaussian latent space that encodes information across all tasks. Each data type (e.g., images, text, labels) is equipped with an encoder that maps samples to \emph{disjoint latent zones}, and a decoder that maps latents back to data. ML tasks are expressed as compositions of these encoders and decoders: for example, label-conditional image generation uses a label encoder and image decoder; image embedding uses an image encoder; classification uses an image encoder and label decoder. We demonstrate the promise of \nameshort{} in three increasingly complex scenarios: \textbf{(1) \nameshort{} can enhance existing models (image generation):} When combined with the SoTA Rectified Flow model, \nameshort{} improves FID on \cifar{} from 2.76 to 2.59—without modifying the training objective. \textbf{(2) \nameshort{} can solve tasks independently (representation learning):} \nameshort{} can implement unsupervised representation learning without auxiliary loss functions, outperforming the seminal MoCo and SimCLR methods by 9.3\% and 0.2\%, respectively, on downstream linear classification on \imagenet{}. \textbf{(3) \nameshort{} can solve multiple tasks simultaneously (joint generation and classification)}: With image and label encoders/decoders, \nameshort{} performs both tasks jointly by design, improving FID and achieving SoTA classification accuracy on \cifar{}. The code and trained models are available at \url{https://github.com/microsoft/latent-zoning-networks}. The project website is at \url{https://zinanlin.me/blogs/latent_zoning_networks.html}.

\end{abstract}

\blfootnote{$^\dagger$ Main updates in arXiv V2: Improving \cref{fig:latent}, fixing \cref{fig:operation}, expanding related work discussions in \cref{sec:gen}. }
\vspace{-0.5cm}

\vspace{-0.5cm}
\section{Introduction}
\label{sec:intro}
\vspace{-0.2cm}

\begin{figure}[t]
    \centering
    \vspace{-0.8cm}    \includegraphics[width=0.9\linewidth]{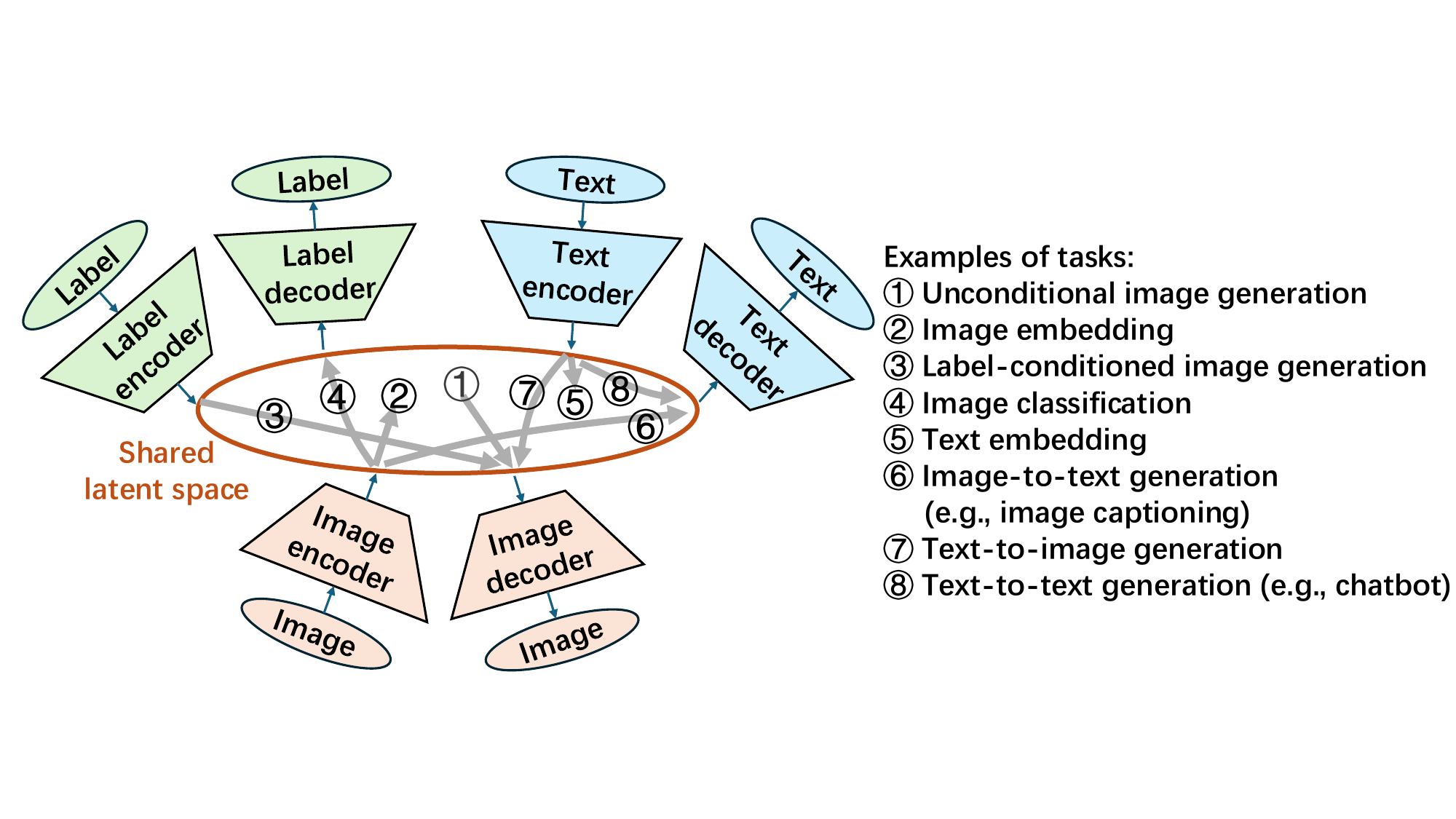}
    \caption{\name{} (\nameshort{}) connects multiple encoders and decoders through a shared latent space, enabling a wide range of ML tasks via different encoder-decoder combinations or standalone encoders/decoders. The figure illustrates eight example tasks, but more could be supported. Only tasks 1-4 are evaluated in this paper, while the rest are for illustration. }
    \label{fig:framework}
\end{figure}

\begin{figure}[t]
\centering
\vspace{-0.cm}
\begin{minipage}{.43\textwidth}
  \centering
  \includegraphics[width=.9\linewidth]{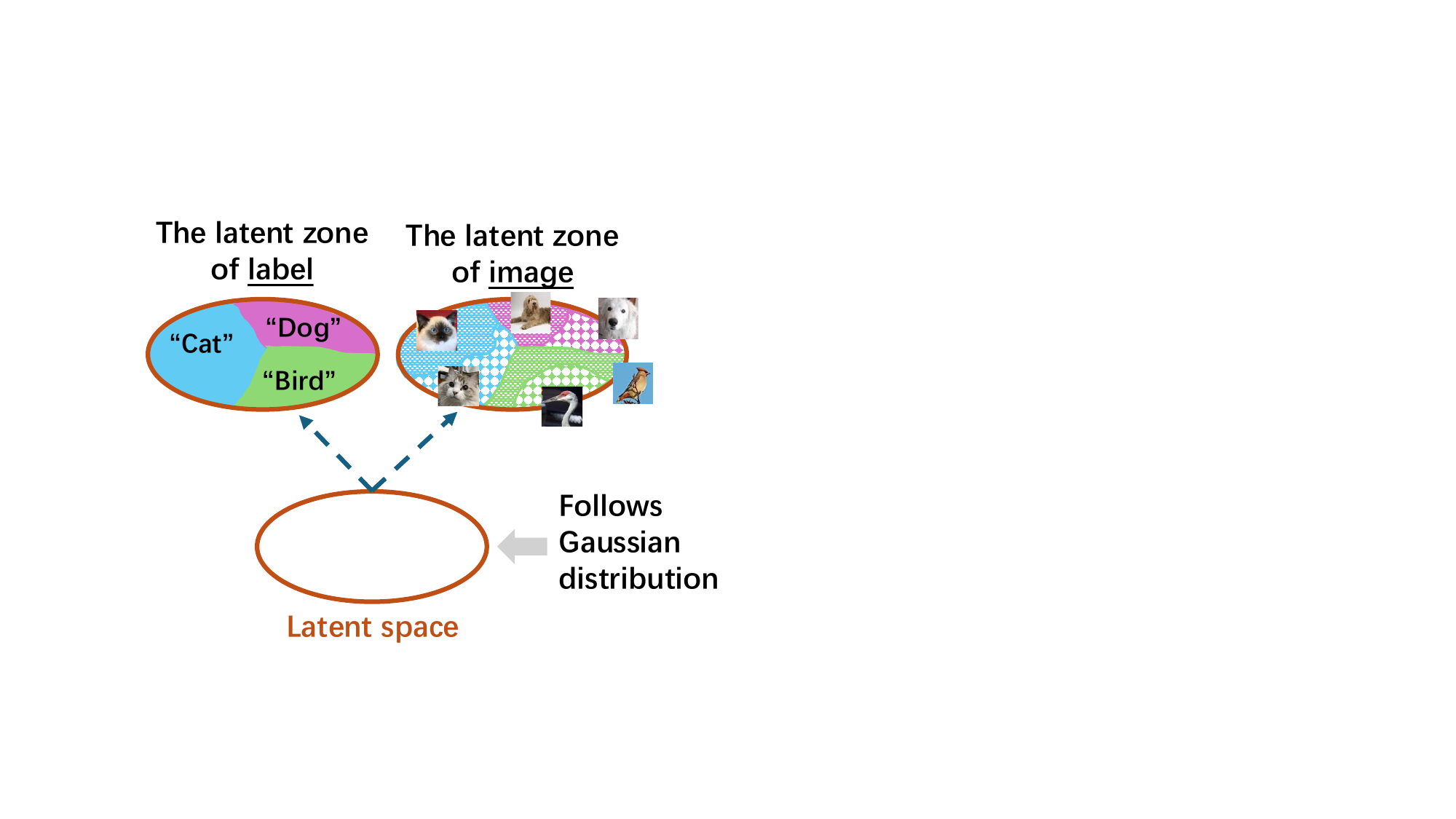}
  \captionof{figure}{The latent space of \nameshort{} has two key properties:  
\textbf{(1) Generative:} It follows a simple Gaussian prior, allowing easy sampling for generation tasks.  
\textbf{(2) Unified:} It serves as a shared representation across all \emph{data types} (e.g., image, text, label).  
Each data type induces a distinct partitioning of the latent space into \emph{latent zones}, where each zone corresponds to a specific sample (e.g., an individual image or label). The latent space is shown as a closed circle for illustration, but it is unbounded in practice.}
  \label{fig:latent}
\end{minipage}%
~~~
\begin{minipage}{.57\textwidth}
  \centering
  \includegraphics[width=.85\linewidth]{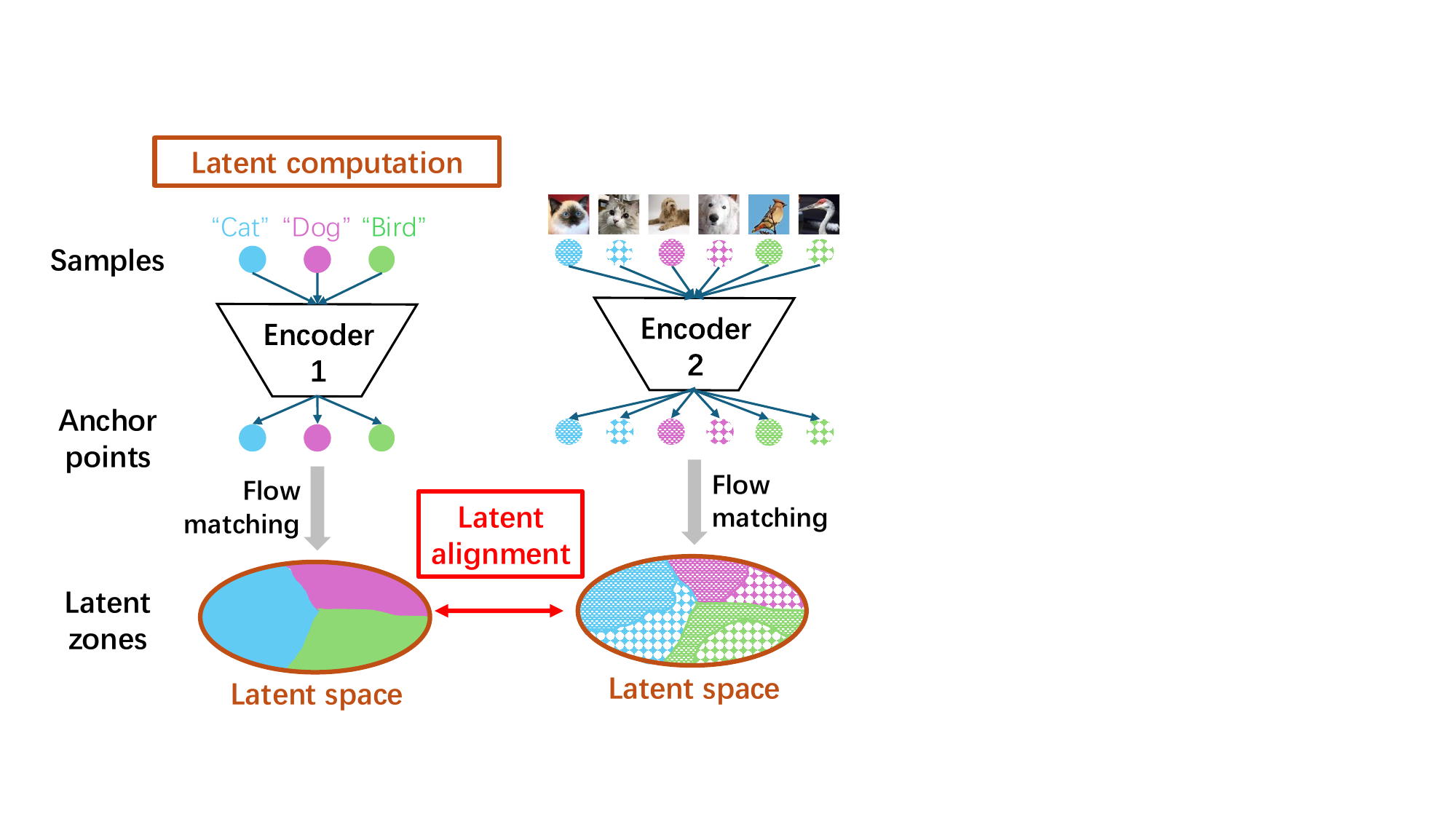}
  \captionof{figure}{Training and inference in \nameshort{} rely on two atomic operations:  
\textbf{(1) Latent computation (\cref{sec:lzn_latent_computation}):} Computes latent zones for a data type by encoding samples into \emph{anchor points} and using flow matching (FM) \cite{liu2022flow,lipman2022flow} to partition the latent space. Conversely, any latent point can be mapped to a sample via the decoder (not shown).  
\textbf{(2) Latent alignment (\cref{sec:lzn_latent_alignment}):} Aligns latent zones across data types by matching their FM processes.
\emph{This figure also illustrates the approach for \nameshort{} in joint conditional generative modeling and classification (\cref{sec:gen_and_class}).}
}

  \label{fig:operation}
\end{minipage}
\vspace{-0.4cm}
\end{figure}

Generative modeling, representation learning, and classification are three of the most widely used machine learning (ML) tasks. Generative models like DALLE~\cite{ramesh2021zero,ramesh2022hierarchical,betker2023improving} and GPT~\cite{radford2018improving,radford2019language,brown2020language,achiam2023gpt} power applications such as question answering and content creation. Representation learning, exemplified by CLIP~\cite{radford2021learning}, supports tasks like information retrieval. Classification is central to tasks such as object recognition~\cite{deng2009imagenet} and sentiment analysis~\cite{devlin2019bert,liu2019roberta}.

Notably, the state-of-the-art (SoTA) techniques for these tasks differ. For example, SoTA generative modeling relies on diffusion models \cite{ho2020denoising,sohl2015deep,song2019generative} and auto-regressive transformers \cite{radford2018improving,radford2019language,brown2020language,achiam2023gpt}; SoTA representation learning employs contrastive loss \cite{he2020momentum,chen2020simple,grill2020bootstrap}; and SoTA classification uses dedicated models trained with cross-entropy loss and its variants \cite{lin2017focal}.
Although using distinct methods for these tasks has long been established and widely accepted in the community, we revisit this methodology from first principles and question its necessity. Specifically, we ask, out of curiosity:
\highlightblock{Can a single principle unify generative modeling, representation learning, and classification?}
Part of our motivation stems from Occam's Razor \cite{Webb2010}, which favors simpler solutions when possible. More importantly, while these tasks differ in formulation, they are fundamentally related and can benefit from one another; a unified principle could facilitate such synergy.\footnote{While auto-regressive (AR) transformers with large-scale pre-training provide one approach to unify these tasks \cite{radford2018improving,radford2019language,brown2020language,achiam2023gpt}, SoTA transformer-based representation learning still relies on contrastive learning \cite{li2023towards,zhang2024mgte}. More importantly, our approach can be viewed as an orthogonal layer on top of transformers and should be seen as complementary rather than competing. See \cref{sec:relationship_to_alternatives} for further discussion.}

In this paper, we reflect on the strengths and limitations of existing techniques and propose a new unified framework, \name{} (\nameshort{}), illustrated in \cref{fig:framework,fig:latent}. At the core of our design is a shared \emph{latent space} that connects a series of encoders and decoders, each corresponding to a specific \emph{data type} (e.g., images, text, labels). 
Encoders map data into a \emph{zone} in the latent space, and the corresponding decoder maps it back to the data.
Different tasks can be interpreted as performing ``translations'' within the latent space—either using encoder–decoder pairs or leveraging a single encoder or decoder.
Compared to popular \emph{representation learning} approaches, which place no constraint on the latent distribution \cite{chen2020simple}, \nameshort{}'s latent space is \emph{generative}: it follows a simple prior distribution for easy sampling. In contrast to modern \emph{generative modeling} approaches, where different conditions (e.g., class labels, text) are treated as separate inputs \cite{zhang2023adding}, \nameshort{} maintains a \emph{single} latent space that unifies different types of conditional information. Finally, unlike standard \emph{classification}, where class labels are model outputs, \nameshort{} treats ``class labels'' as a data type connected through the latent space like any other data type.

To train and perform inference with \nameshort{}, we rely on two atomic operations (\cref{fig:operation,sec:lzn}):  
\textbf{(1) Latent computation:} Given an encoder, we compute the \emph{latent zones} for a batch of samples. Specifically, we first use the encoder to compute each sample's \emph{anchor point}, then apply flow matching \cite{liu2022flow,lipman2022flow} to map these points to their corresponding latent \emph{zones}. This procedure ensures that the resulting zones collectively follow a simple Gaussian distribution, facilitating generation tasks, while also guaranteeing that zones from different samples remain disjoint--allowing them to serve as unique latents for classification and representation learning.  
\textbf{(2) Latent alignment:} Aligning the \emph{latent zones} produced by two different encoders to facilitate the tasks that require translations between encoders and decoders from different data types. This is a fundamentally challenging task due to its discrete nature. To address it, we introduce a novel ``soft'' approximation that performs the alignment midway through the flow matching process, enabling effective and tractable training.

We demonstrate that, despite its simplicity and reliance on just two atomic operations, \nameshort{} is capable of supporting a wide range of seemingly diverse tasks. To illustrate its versatility and practical utility, we present three \emph{levels} of applications:
\begin{packeditemize}
    \item \textbf{L1: Enhancing \emph{one} existing task (\cref{sec:gen}).} Because \nameshort{} latents can be computed without supervision, they can be seamlessly integrated into existing models as an additional conditioning signal--without requiring any changes to the training loss or methods. In this setup, \nameshort{} latents hopefully can learn to improve the task performance. To demonstrate this, we incorporate \nameshort{} into rectified flow models \cite{liu2022flow}--a state-of-the-art generative approach for images--and observe improved sample quality across \cifar{}, \afhqcat{}, \celebahq{}, \lsunbedroom{} datasets. Specifically, on \cifar{}, \nameshort{} closes the FID gap between conditional and unconditional generation by 59\%.
    
    \item \textbf{L2: Solving \emph{one} task \emph{independently} (\cref{sec:repr}).} \nameshort{} can also tackle tasks entirely on its own, without relying on existing methods. As a case study, we use \nameshort{} to implement unsupervised representation learning, a task traditionally addressed with contrastive loss. We find that \nameshort{} can even outperform the seminal methods such as MoCo \cite{he2020momentum} and SimCLR \cite{chen2020simple} by 9.3\% and 0.2\%, respectively,  on downstream \imagenet{} linear classification.
    
    \item \textbf{L3: Solving \emph{multiple} tasks \emph{simultaneously} (\cref{sec:gen_and_class}).} Pushing further, \nameshort{} is capable of handling \emph{multiple} tasks at once. In particular, we employ two encoder–decoder pairs—one for images and one for labels—enabling \nameshort{} to jointly support class-conditional generation and classification within a single, unified framework. Built on rectified flow, this implementation outperforms the baseline conditional rectified flow model in generation quality, while also achieving state-of-the-art classification accuracy.  Notably, the performance on both generation and classification exceeds that of training each task in isolation.  This supports our core motivating intuition: seemingly distinct tasks can benefit from shared representations, and \nameshort{} provides a principled framework for enabling such synergy.
\end{packeditemize}

While the early results are promising, many challenges, open questions, and exciting opportunities remain unexplored. In principle, as more encoder–decoder pairs are added to the shared latent space, the range of applications \nameshort{} can support should grow at least quadratically (\cref{fig:framework}). Whether \nameshort{} can scale gracefully and realize this potential remains to be seen. We hope this work opens a new line of research toward this ambitious vision. See more discussions in \cref{sec:discussions}.

\vspace{-0.2cm}
\section{\name{} (\nameshort{})}
\label{sec:lzn}
\vspace{-0.2cm}

\subsection{Overall Framework}

\myparatightestn{Revisiting existing approaches.}
To motivate our design on a unified framework for diverse ML tasks, we first analyze the strengths and limitations of existing approaches.
\begin{packeditemize}
    \item \textbf{Generative modeling.} Given samples $\sample \sim p$ from an unknown distribution $p$, generative models aim to learn a decoder $\decodernotation$ such that $\decoder{\latent}$ approximates $p$, where $\latent$ is random noise drawn from a simple distribution.\footnote{For diffusion models \cite{ho2020denoising,sohl2015deep,song2019generative}, $\latent$ is the initial Gaussian noise in the sampling process, plus intermediate noise if using SDE sampling \cite{song2020score}. For AR transformers, $\latent$ can be seen as the randomness in token sampling.} While $\latent$  can carry useful information for representation learning and classification \cite{lin2020infogan}, this pipeline has key limitations: \textbf{(1)} The mapping from $\latent$ to $\sample$ lacks flexibility. For example, in diffusion models, the optimal mapping is fixed once the distributions of $\sample$ and $\latent$ are fixed \cite{song2019generative}. To introduce controllability, models often augment $D$ with additional condition inputs $c_1,\ldots,c_k$: $G(\latent, c_1, \ldots, c_k)$ \cite{zhang2023adding}. This is suboptimal--conditions may overlap or conflict (e.g., text vs. label conditions in image generation), and the resulting representation $(\latent, c_1, \ldots, c_k)$ becomes fragmented. \textbf{(2)} Inverting $D$ to recover $\latent$ from a sample $\sample$ is non-trivial for some SoTA generative models \cite{karras2020analyzing}. These issues limit the effectiveness of generative models for representation learning, as also observed in prior work \cite{fuest2024diffusion}.
    \item \textbf{Unsupervised representation learning.}\footnote{We use ``latent'', ``representation'', and ``embedding'' interchangeably in the paper.} SoTA representation learning typically uses contrastive loss \cite{chen2020simple}, where an encoder $E$ maps related sample pairs--either from the same modality (e.g., image augmentations) or across modalities (e.g., image–text pairs)--to similar embeddings, while pushing unrelated samples apart. These embeddings can perform zero-shot classification by comparing pairwise cosine similarities \cite{radford2021learning} or be adapted for classification using a linear head trained with cross-entropy loss \cite{chen2020simple}. However, contrastive loss leads $E$ to discard important details (e.g., augmentations, modalities), making the representations unsuitable for standalone generation. Moreover, with few exceptions \cite{arpit2021momentum}, the representations lack distributional constraints, making them hard to sample from for generative tasks unless training an additional generative model \cite{li2024return}.
    \item \textbf{Classification.} The most common and SoTA classification approach trains a dedicated model with cross-entropy loss to map inputs to class labels \cite{devlin2019bert,liu2019roberta}. Intermediate layer outputs can be used as representations \cite{szegedy2016rethinking}.  However, because the objective focuses solely on classification, these representations tend to discard class-irrelevant information, limiting their utility for general-purpose representation or generation. As with contrastive learning, they also lack distributional constraints for generative tasks.
\end{packeditemize}
While one could combine the above objectives and methods into a single training setup \cite{odena2017conditional,li2023mage}, our focus is on designing a clean, unified framework that naturally integrates all these tasks.

\myparatightestn{Desiderata.} We observe that all the above tasks can be framed as \emph{learning mappings between data and a latent space}. The main differences lie in: the mapping direction (e.g., latent-to-data for generation, data-to-latent for representation/classification), constraints on the latent space (e.g., a simple prior for generative models, none for others), and the amount of information encoded (e.g., class labels for classification tasks, detailed reconstructions for generative models). To support all these tasks in a single framework, we seek: \textbf{(1) A unified latent space} that captures all necessary information of all tasks; \textbf{(2) A generative latent space} that follows a simple distribution; and \textbf{(3) Easy mappings} between data and latent in both directions.

\myparatightestn{Framework.} 
To address the above desiderata, our key designs are (\cref{fig:latent}): 
\begin{packeditemize}
    \item \textbf{A unified latent space.}  
        In existing frameworks, a sample like text can play inconsistent roles--appearing as input latent in text-to-image generation or as output in text generation. This makes it hard to define a unified latent space across tasks.
        
        We address this by introducing a hypothetical \emph{foundation latent space} that represents all possible samples in the world. Each foundation latent is an abstract entity that appears through \emph{observations} in different \emph{data types}, such as images, text, and even class labels (e.g., ``cat'' or ``dog'').
        
        Importantly, different latents can share the same observation (e.g., multiple cat images all labeled ``cat'' and described as ``a cat image''). As a result, each observed sample defines a \emph{latent zone}—a subset of the latent space that produces the same observation in that data type. 
        This provides a unified way to represent and connect all data types within the same latent space.
    \item \textbf{A generative latent space.}  
        We enforce the latent space to follow a Gaussian distribution, enabling easy \emph{unconditional} sampling without constraining any data type.  Our framework also supports easy \emph{conditional} sampling from a \emph{latent zone} induced by an observed sample (e.g., a label).
    \item \textbf{Easy mappings.}  
        Given samples of a data type, we compute their latent zones via the corresponding \emph{encoder}. Conversely, a latent point can be decoded into a data type using its \emph{decoder}.

\end{packeditemize}

\myparatightestn{Tasks.} This design naturally supports a variety of tasks (\cref{fig:framework}):
\begin{packeditemize}
    \item \textbf{Single-module tasks.} A standalone encoder or decoder can perform specific tasks independently. For instance, the image encoder alone produces image embeddings (representations), while the image decoder alone enables unconditional image generation.
    \item \textbf{Cross-module tasks.} Any encoder–decoder pair defines a task. For example, \texttt{label encoder + image decoder} enables class-conditional image generation, \texttt{image encoder + label decoder} does classification, and \texttt{text encoder + text decoder} supports text generation.

\end{packeditemize}
We expect that tasks can benefit from each other through this unified framework (validated in \cref{sec:gen_and_class}). Each task contributes its core information to the latent space, making it increasingly expressive and powerful. Conversely, since all tasks interface through the same latent space, improvements in the latent representations can facilitate learning across tasks.

As the latent space partitions into zones, we name this framework \name{} (\nameshort{}).

\subsection{Implementation of Atomic Operations}

Training and inference in \nameshort{} rely on two operations (\cref{fig:operation}): \emph{latent computation} and \emph{latent alignment}.
We will see in \cref{sec:gen,sec:repr,sec:gen_and_class} that these two operations are sufficient to implement a variety of tasks.

\vspace{-0.1cm}
\subsubsection{Latent Computation}
\label{sec:lzn_latent_computation}
\vspace{-0.1cm}

\myparatightestn{Desiderata.}
Latent computation is important in both training and inference of \nameshort{}.
Given samples $\sampleset=\brc{\sample_1,\ldots,\sample_n}$ of the same data type (e.g., images, text, labels), the goal is to sample their latents $\latent_1,\ldots,\latent_n = \latentcomputation{\sampleset}$ with a \emph{random}  latent computation function $\latentcomputationnotation$ such that: \textbf{(1) Prior distribution is Gaussian:} the latent $z\sim \uniformdistribution{\latent_1,\ldots,\latent_n}$ follows Gaussian distribution $\normaldistribution{0}{\identity}$, and \textbf{(2) The latent zones of different samples are disjoint:}  $\supp{z_i} \cap \supp{z_j} = \emptyset$ for $i\not=j$.

\myparatightestn{Approach.}
To achieve this, we first use a deterministic encoder $\encodernotation$ to map each sample to an \emph{anchor point} $\anchor_i = \encoder{\sample_i}$. We then apply the seminal flow matching (FM) method \cite{liu2022flow,lipman2022flow}, which establishes a one-to-one mapping between distributions, to transform these anchor points into latent zones.
Specifically, we define a family of distributions $\pi_t$ with endpoints $\pi_0 = \mathcal{N}(0, \mathbf{I})$, the desired prior latent distribution, and $\pi_1(s) = \frac{1}{n} \sum_{i=1}^{n} \delta(s - \anchor_i)$, the distribution of the anchor points.\footnote{$\delta$ denotes the Dirac delta function.} The intermediate distribution $\pi_t$ is induced by linearly interpolating between samples $s_0 \sim \pi_0$ and $s_1 \sim \pi_1$ via $\varphi(s_0, s_1, t) = (1 - t)s_0 + t s_1$ (i.e., $\pi_t$ is $\bra{1-t}\pi_0+t\pi_1$).
The velocity field in FM \cite{liu2022flow} can then be computed as
\begin{align*}
    V(s,t) \triangleq \expectation{s_0\sim\pi_0,s_1\sim\pi_1}{\frac{\partial \varphi(s_0,s_1,t)}{\partial t}|\varphi(s_0,s_1,t)=s}=\frac{\sum_{i=1}^n (\anchor_i - s)\exp\bra{-\frac{\bra{s-t\anchor_i}^2}{2(1-t)^2}}}{(1-t)\sum_{i=1}^n \exp\bra{-\frac{\bra{s-t\anchor_i}^2}{2(1-t)^2}}}.
\end{align*}
We can obtain $s_t$ by integrating along the FM trajectory: $s_t = \rf{s_0; t} \triangleq s_0 + \int_{\tau=0}^t V(s_\tau, \tau)\diff\tau$ for $s_0 \sim \pi_0$. It has been shown \cite{liu2022flow} that the distribution of $s_t$ is $\pi_t$. Similarly, integrating backward $s_t = \irf{s_{1-g}; t} \triangleq s_{1-g} + \int_{\tau=1-g}^{t} V(s_\tau, \tau)\diff\tau$ for $s_{1-g}\sim \pi_{1-g} \triangleq (1-g) \pi_1 + g\mathcal{N}(0, \mathbf{I})$ also yields $\pi_t$, where $g$ is a small constant.\footnote{FM is well-defined only when both $\pi_0$ and $\pi_1$ have full support. However, in our case, $\pi_1$ is a mixture of Dirac deltas and lacks full support. Therefore, we use the full-support distribution $\pi_{1-g}$ as the starting point.} With a slight abuse of notation, we also write $s_t = \irf{a_i,\epsilon_i; t}$ to represent $\irf{s_{1-g}; t}$ with $s_{1-g} = (1-g)a_i+g\epsilon_i$.
With these setups, we define the latent computation as  
\begin{align}
    \latent_i=\latentcomputation{\sampleset}_i \triangleq \irf{a_i, \epsilon_i; 0}, \quad \text{where } \epsilon_i \sim \mathcal{N}(0, \mathbf{I}).\label{eq:latent_computation}
\end{align}
Due to the discussed FM properties, we can see that this approach (approximately) satisfies the two desiderata by construction (see \cref{app:latent_computation_property} for more details). 

\myparatightestn{Implementation.}  
Computing $\latentcomputationnotation{}$ involves an integral, which we approximate using standard numerical solvers such as Euler or DPM-Solver \cite{lu2022dpm,lu2022dpmp} that operates on a finite list of time steps $t_1,\ldots,t_\solversteps$, as in prior work on diffusion and RF models \cite{liu2022flow,song2020score}. A key property of our approach is that all operations are differentiable, allowing gradients to backpropagate all the way from latent $\latent_i$ to the encoders $\encodernotation{}$ during training.
More details are deferred to \cref{app:latent_computation_implementation}.

\myparatightestn{Efficiency optimization.}  
Computing the velocity $V$ requires access to all samples, making the memory and computation cost of $\latentcomputationnotation$ high. To address this, we introduce several optimization techniques--latent parallelism, custom gradient checkpointing, and minibatch approximation--that make the training of \nameshort{} scalable. See \cref{app:latent_computation_efficiency} for details.

\vspace{-0.2cm}
\subsubsection{Latent Alignment}
\label{sec:lzn_latent_alignment}
\vspace{-0.2cm}

\myparatightestn{Desiderata.}
Following \cref{sec:lzn_latent_computation}, latent zones from different data types are computed independently, which undermines the purpose of a shared latent space. Many applications require these zones to be \emph{aligned}. We consider two types of alignment:  
\textbf{(1) Many-to-one (and one-to-many) alignment:} for example, the latent zone of the ``cat'' label should cover all latent zones of all cat images.  
\textbf{(2) One-to-one alignment:} for example, in image-text datasets, paired image and text samples should share the same latent zone. Concrete examples will be shown in \cref{sec:repr,sec:gen_and_class}.

Formally, let $\sampleset=\brc{\sample_1,\ldots,\sample_n}$ and $\sampleyset=\brc{\sampley_1,\ldots,\sampley_m}$ be two datasets from different data types.  
The pairing is defined by $\mapping_i$, where $\sampley_i$ (e.g., a cat image) is paired with $\sample_{\mapping_i}$ (e.g., the ``cat'' label).  
We aim to ensure $\supp{\latentcomputation{\sampleset}_{\mapping_i}} \supseteq \supp{\latentcomputation{\sampleyset}_{i}}$ for all $i \in [m]$, meaning the latent zone of $\sample_{\mapping_i}$ covers that of $\sampley_i$.  
This formulation supports many-to-one alignments directly. For one-to-one alignment, a symmetric constraint can be added with $\sample$ and $\sampley$ swapped.

\myparatightestn{Approach.}  
Given the FM integral trajectories, alignment reduces to ensuring that the latent of $\sampley_i$, when mapped via the trajectory, matches the anchor point of $\sample_{\mapping_i}$:  
$
\rf{\latentcomputation{\sampleyset}_i; 1} = \encoder{\sample_{\mapping_i}}.
$

\underline{Challenge: discrete assignment is non-differentiable.}  Before introducing our solution, we illustrate why the problem is nontrivial by examining strawman approaches. 
A natural idea is to directly minimize the distance: 
$
\distance{\rf{\latentcomputation{\sampleyset}_i; 1}, \encoder{\sample_{\mapping_i}}},
$
where $\distance{\cdot,\cdot}$ is a distance metric. 
This approach fails because FM deterministically maps each latent to exactly one anchor point $\encoder{\sample_j}$, so the above objective effectively becomes minimizing the distance between anchor points. However, a latent zone is influenced by all anchor points, not just its own. Therefore, reducing the distance between a pair of anchors does not necessarily improve zone-level alignment.
More fundamentally, the core challenge is that FM induces a \emph{discrete assignment}: each latent deterministically maps to one anchor. This discrete operation is non-differentiable and cannot be directly optimized during training.

\underline{Technique 1: Soft approximation of alignment.} To address this issue, our key idea is to introduce a soft approximation of the discrete anchor assignment process.
Let us define $s_t^i = \rf{\latentcomputation{\sampleyset}_i; t}$ and $\anchor_l = \encoder{\sample_l}$. By construction, the distribution $\pi_t$ is a mixture of Gaussians:
$
\pi_t = \frac{1}{n}\sum_{l=1}^n \mathcal{N}(t\anchor_l, (1 - t)^2 \mathbf{I}),
$
where the $l$-th component corresponds to anchor $\anchor_l$.
We define the (soft) probability that $s_t^i$ is assigned to $\anchor_l$ as being proportional to the density of $s_t^i$ under the $l$-th Gaussian component:
\begin{align*}
\probassign{\anchor_l}{s_t^i} = \nicefrac{\exp\bra{-\frac{\|s_t^i - t\anchor_l\|^2}{2(1 - t)^2}}}{\sum_{j=1}^n \exp\bra{-\frac{\|s_t^i - t\anchor_j\|^2}{2(1 - t)^2}}}.
\end{align*}
This formulation provides a smooth, differentiable approximation of the otherwise discrete assignment. When $t = 0$, the approximation is fully smooth, with $\probassign{\anchor_l}{s_0^i} = 1/n$ for all $i,l$, reflecting a uniform assignment. As $t$ increases toward 1, the assignment becomes sharper. In the limit as $t \to 1$, %
it
converges to the true discrete assignment, where $s_t^i$ deterministically maps to its assigned anchor.

From this, a straightforward idea is to maximize the assignment probability over all time steps such as 
$\sum_{t\in \brc{t_1,\ldots,t_\solversteps}} \probassign{\anchor_{\mapping_i}}{s_t^i}$ 
(recall that $t_i$s are solver time steps; see \cref{sec:lzn_latent_computation}). 
However, our ultimate goal is only to ensure correct assignment at $t = 1$. Even if this is achieved, the above objective would continue to push the intermediate states $s_t$ toward $\anchor_{\mapping_i}$, which is unnecessary and potentially harmful.

\underline{Technique 2: Optimizing maximum assignment probability.} To avoid this, we propose to maximize 
$\max_{t\in \brc{t_1,\ldots,t_\solversteps}} \probassign{\anchor_{\mapping_i}}{s_t^i}$. 
This ensures that once the trajectory reaches the correct anchor near $t = 1$, the objective is maximized (i.e., equals 1) and no further gradient is applied %
as desired.
However, this approach introduces a new issue: if $s_t$ diverges from $\anchor_{\mapping_i}$ early on, the maximum probability remains at the constant $1/n$ (attained at $t = t_1 = 0$), yielding no training signal.

\underline{Technique 3: Early step cutoff.} 
To mitigate this, we truncate the set of time steps used in the maximization, restricting it to the later stages of the trajectory: 
$\brc{t_\solverstepsstart,\ldots,t_\solversteps}$, where $\solverstepsstart$ is a hyperparameter that excludes early time steps. 
Putting it all together, our proposed alignment objective is:
\begin{align}
    \latentalign{\sampleset,\sampleyset} \triangleq \textrm{maximize } \sum_{i=1}^m \max_{t \in \brc{t_\solverstepsstart,\ldots,t_\solversteps}}  \probassign{\anchor_{\mapping_i}}{s_t^i}.
    \label{eq:align_without_log}
\end{align}
Please see \cref{app:latent_alignment} for more implementation details.

\vspace{-0.2cm}
\subsection{Decoder}
\vspace{-0.2cm}

The decoder $\decodernotation$ maps the \nameshort{} latent back to its corresponding sample: $\decoder{\latent_i} = \sample_i$. As we see %
later, it can be implemented using either a generative model (\cref{sec:gen,sec:gen_and_class}) or FM (\cref{sec:gen_and_class}).

\vspace{-0.2cm}
\subsection{Relationships to Alternatives}
\label{sec:relationship_to_alternatives}
\vspace{-0.2cm}

A prominent alternative that can also unifies different ML tasks is the use of AR transformers with large-scale pertaining, such as large language models (LLMs)~\cite{radford2018improving,radford2019language,brown2020language,achiam2023gpt}. These models unify \emph{generation} tasks by representing all data as sequences of tokens and modeling them in an AR manner. \emph{Classification} tasks are cast as generation problems—for instance, prompting the model to complete the sentence ``Which animal is in this image?''~\cite{radford2019language}. Additionally, prior work has shown that the intermediate outputs in these models can serve as strong \emph{representation} for downstream tasks \cite{reimers-2019-sentence-bert}.

As such, this approach is \emph{generation-centric}: the core formulation remains a generative modeling problem. Tasks that can be framed as generation--such as classification-can be unified naturally. However, for other tasks like representation learning, this approach must rely on surrogate methods, such as using intermediate model outputs. In contrast, \emph{\nameshort{} offers a new formulation that seamlessly unifies all these tasks within a single framework}, as we will demonstrate in the following sections.

\textbf{More importantly, AR transformers (and other generative models) should be seen as \emph{orthogonal and complementary} to \nameshort{}, rather than as competitors.} In particular, \nameshort{} decoders that map latents to data samples can be instantiated using any generative model. We demonstrate this in \cref{sec:gen,sec:gen_and_class}. This allows \nameshort{} to leverage the strengths of existing generative models within its unified framework.

\vspace{-0.2cm}
\subsection{Scope of Experiments}
\vspace{-0.2cm}

While the framework is general, this paper focuses specifically on the \emph{image} domain. We present three case studies: (1) generation (\cref{sec:gen}), (2) unsupervised representation learning (\cref{sec:repr}), and (3) joint classification and generation (\cref{sec:gen_and_class}). These case studies are arranged in order of increasing complexity: the first \emph{enhances a single task}, the second solves a task using \emph{only \nameshort{} without any other external objectives}, and the third tackles \emph{multiple tasks simultaneously} within the same framework.

\vspace{-0.2cm}
\section{Case Study 1: Unconditional Generative Modeling}
\label{sec:gen}
\vspace{-0.2cm}

\begin{figure}[t]
    \centering

    \vspace{-0.8cm}       \includegraphics[width=0.9\linewidth]{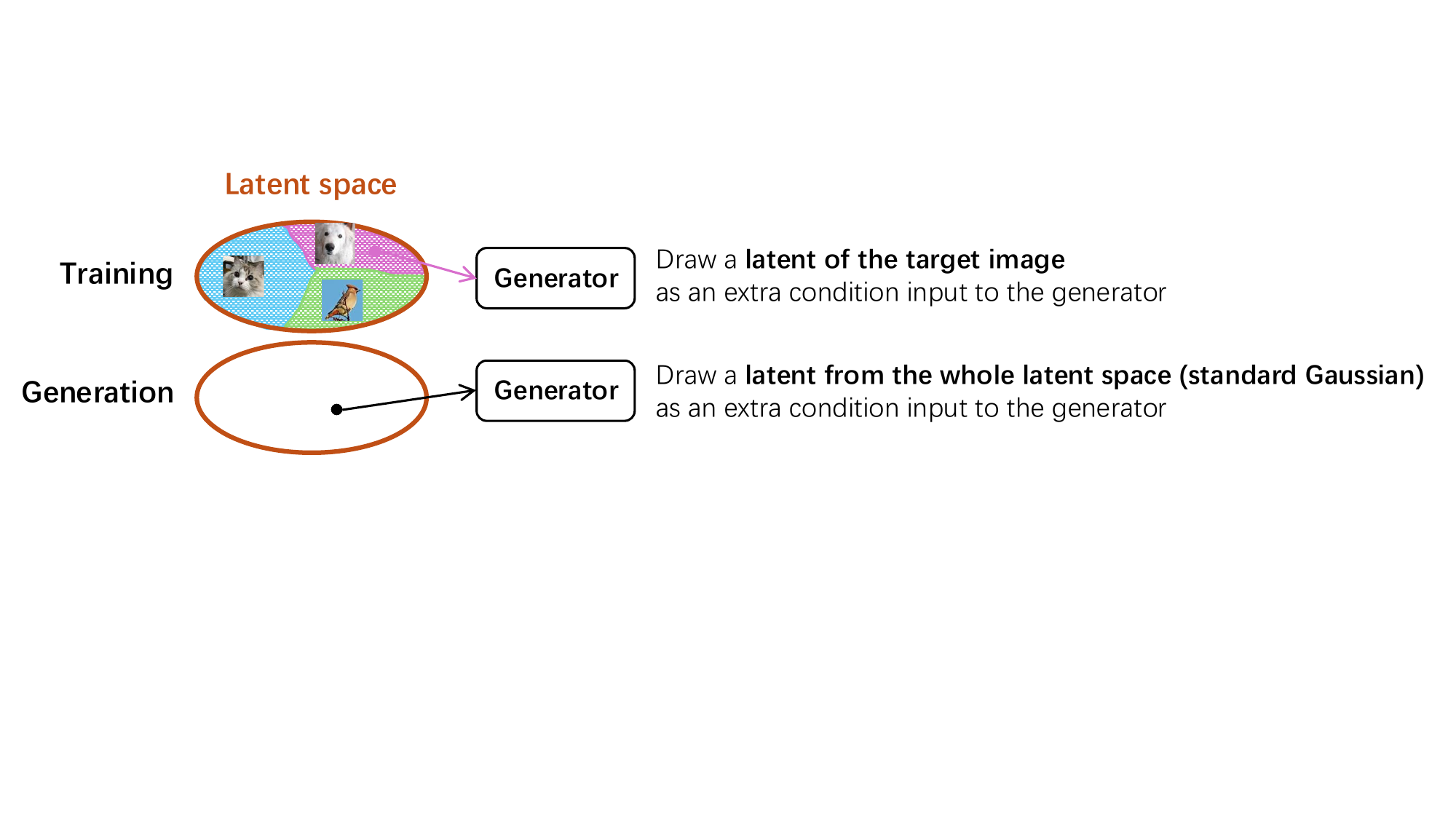}
    \caption{\nameshort{} for unconditional generative modeling (\cref{sec:gen}). During training, the \nameshort{} latent of each target image is fed as an extra condition to the rectified flow (RF) model \cite{liu2022flow}, making the RF learn \emph{conditional} flows based on \nameshort{} latents. The objective remains the standard RF loss, and the \nameshort{} encoder is trained end-to-end within it. During generation, we sample \nameshort{} latents from a standard Gaussian and use them as the extra condition. We illustrate the approach with RF, but since \nameshort{} latents require no supervision and are differentiable, the method could apply to other tasks by adding a condition input for \nameshort{} latents to the task network.
  }
    \label{fig:use_case_1_method}
\end{figure}

\myparatightestn{Approach (\cref{fig:use_case_1_method}).}  
Since \nameshort{} latents $\latentcomputation{\sampleset}$ can be computed using a standalone encoder without introducing new training objectives (\cref{sec:lzn_latent_computation}), they can be easily integrated into existing pipelines as additional network inputs, without modifying the original loss function.
We apply this idea to \emph{unconditional generative models}, where the generator serves as the \nameshort{} decoder. We modify the decoder by adding an extra input to accept the \nameshort{} latents. Both the encoder and decoder are trained jointly using the original generative loss. In this setup, latent alignment (\cref{sec:lzn_latent_alignment}) is not used.
Importantly, the encoder is only used during training to compute $\latentcomputation{\sampleset}$. During inference, latents are sampled from the Gaussian prior and passed directly to the decoder, \textbf{maintaining the original model's inference efficiency.} Please see \cref{app:gen_pseudocode} for detailed pseudocode of the training and generation processes.

\myparatightestn{Why it helps.}  
\nameshort{} latents provide unique representations for each image. This pushes the generator (decoder) to depend more directly on the latent for reconstruction, making the image distribution conditioned on the latent more deterministic. As a result, the generative objective becomes easier to optimize. Our experiments later confirm that \nameshort{} latents indeed capture useful features for generation.

\myparatightestn{Related work.}  
RCG \cite{li2024return} and Diffusion Autoencoder (DA) \cite{preechakul2022diffusion} also aim to improve image generation using unsupervised representations. RCG leverages a pre-trained representation model, while DA uses an encoder trained jointly with the generator. Similar to \nameshort{}, these representations are provided as additional inputs to the generative model. However, because their representation lacks distributional constraints, both methods require training an auxiliary generative model over the representations to enable unconditional generation. In contrast, \nameshort{} enforces a Gaussian representation by construction, eliminating this extra step and allowing everything to be trained end-to-end in one stage.

\begin{table*}[t]
\small
    \centering
    
    \vspace{-1cm}
    \caption{Unconditional image generation quality scores across four datasets. The best results are in \graybox{gray box}. Applying \nameshort{} to generative models improves RF on most image quality metrics. RF is a SoTA method; due to space constraints, we omit additional methods—see \cite{liu2022flow} for extensive comparisons between RF and others. Note that Inception Score (IS) is best suited for natural images like \cifar{}, though we report it for all datasets for completeness.}
    \vspace{-3mm}
    \label{tab:gen}
    \setlength{\tabcolsep}{3.8pt}
    \resizebox{1\textwidth}{!}{
    \begin{tabular}{l|ccccccc|ccccccc}
    \toprule
    \multirow{2}{*}{Algo.} & \multicolumn{7}{c|}{{\cifar{} \threetwo{}}} & \multicolumn{7}{c}{{\afhqcat{} \twofivesix{}}}\\
    \cline{2-15}
     & FID{\color{red}$\downarrow$} & 
     sFID{\color{red}$\downarrow$} & IS{\color{blue}$\uparrow$} & Precision{\color{blue}$\uparrow$} & Recall{\color{blue}$\uparrow$} & CMMD{\color{red}$\downarrow$} & Recon{\color{red}$\downarrow$} & FID{\color{red}$\downarrow$} & 
     sFID{\color{red}$\downarrow$} & IS{\color{blue}$\uparrow$} & Precision{\color{blue}$\uparrow$} & Recall{\color{blue}$\uparrow$} & CMMD{\color{red}$\downarrow$} & Recon{\color{red}$\downarrow$}\\
    \hline
    RF &   2.76 & 4.05 & 9.51 & \graybox{0.70} & \graybox{0.59} & 0.0360 & 0.83
    &  6.08 & 49.60 & 1.80 & 0.86 & 0.28 & 0.5145 & 17.92  \\
    RF+\nameshort{}  &  \graybox{2.59} & \graybox{3.95} & \graybox{9.53} & \graybox{0.70} & \graybox{0.59} & \graybox{0.0355} & \graybox{0.41}
    &  \graybox{5.68} & \graybox{49.32} & \graybox{1.96} & \graybox{0.87} & \graybox{0.30} & \graybox{0.3376} & \graybox{10.29} \\
    \bottomrule
    \toprule
    \multirow{2}{*}{Algo.} & \multicolumn{7}{c|}{{\celebahq{} \twofivesix{}}} & \multicolumn{7}{c}{{\lsunbedroom{} \twofivesix{}}}\\
    \cline{2-15}
     & FID{\color{red}$\downarrow$} & 
     sFID{\color{red}$\downarrow$} & IS{\color{blue}$\uparrow$} & Precision{\color{blue}$\uparrow$} & Recall{\color{blue}$\uparrow$} & CMMD{\color{red}$\downarrow$} & Recon{\color{red}$\downarrow$} & FID{\color{red}$\downarrow$} & 
     sFID{\color{red}$\downarrow$} & IS{\color{blue}$\uparrow$} & Precision{\color{blue}$\uparrow$} & Recall{\color{blue}$\uparrow$} & CMMD{\color{red}$\downarrow$} & Recon{\color{red}$\downarrow$}\\
    \hline
    RF & \graybox{6.95} & 10.61 & 2.91 & \graybox{0.76} & 0.42 & 1.0276 & 26.20 
    &  6.25 & \graybox{16.22} & 2.18 & \graybox{0.60} & 0.40 & 0.5218 & 48.72 \\
    RF+\nameshort{} &  7.17 & \graybox{10.33} & \graybox{2.92} & \graybox{0.76} & \graybox{0.45} & \graybox{0.4901} & \graybox{15.90}
    &  \graybox{5.95} & 17.84 & \graybox{2.22} & 0.59 & \graybox{0.41} & \graybox{0.4843} & \graybox{37.01} \\
    \bottomrule
\end{tabular}
}
\vspace{-0mm}
\end{table*}
\begin{figure}[t]
    \centering
    \includegraphics[width=0.32\linewidth]{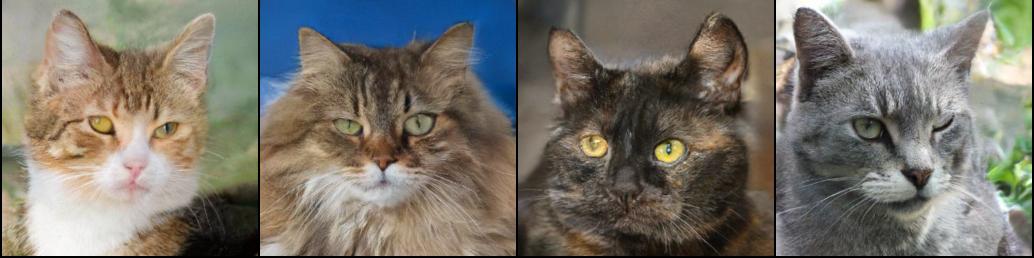}~~~\includegraphics[width=0.32\linewidth]{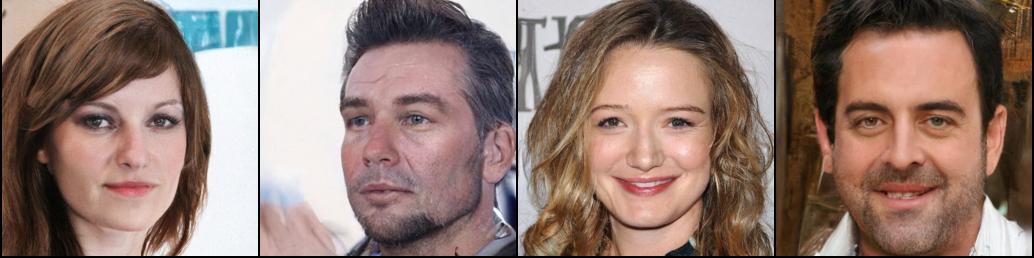}~~~\includegraphics[width=0.32\linewidth]{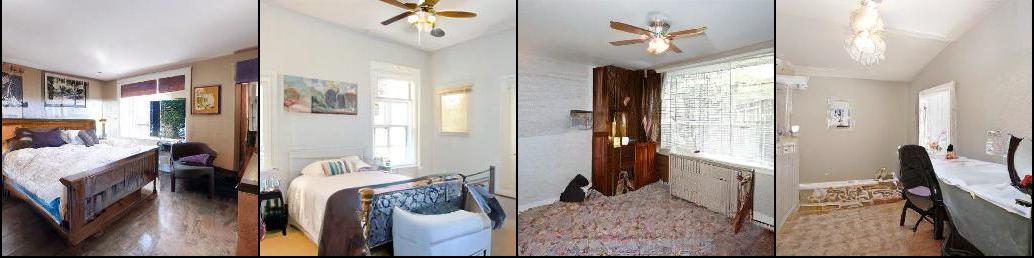}
    \caption{Generated images of RF+\nameshort{} on \afhqcat{}, \celebahq{}, \lsunbedroom{}. More in  \cref{app:gen}.}
    \label{fig:gen_image_few}
    \vspace{-0.5cm}
\end{figure}

\myparatightestn{Results.}
We plug \nameshort{} latents into the seminal rectified flow (RF) models \cite{liu2022flow}, which is closely related to diffusion models \cite{ho2020denoising,sohl2015deep,song2019generative}. RF achieved SoTA image generation performance on several datasets \cite{liu2022flow,lee2024improving}, and has been used in some latest Stable Diffusion \cite{esser2024scaling} and AR models \cite{liu2024distilled,liu2025distilled}.

We follow the experimental setup of RF \cite{liu2022flow}, evaluating on four datasets: \cifar{}, \afhqcat{}, \celebahq{}, and \lsunbedroom{}. The latter three are high-resolution ($256\times256$). In addition to standard metrics—FID \cite{heusel2017gans}, sFID \cite{nash2021generating}, IS \cite{salimans2016improved}, precision \cite{kynkaanniemi2019improved}, recall \cite{kynkaanniemi2019improved}, and CMMD \cite{jayasumana2024rethinking}—we also report \emph{reconstruction error} (the $\ell_2$ distance between an image and its reconstruction), which is relevant for applications like image editing via latent manipulation \cite{shen2020interpreting,lin2020infogan}. Results are shown in \cref{tab:gen}, with three key findings: \textbf{(1) \nameshort{} latents improve image quality.} During inference, the only difference in RF+\nameshort{} is the inclusion of an additional latent drawn from a Gaussian distribution. This improvement indicates that the decoder effectively leverages \nameshort{} latents for meaningful generative features. \textbf{(2) \nameshort{} significantly reduces reconstruction error across all datasets,} further confirming that its latents capture essential image information. \textbf{(3)} While unconditional generation is important, its image quality often trails that of conditional generation \cite{li2024return}. Compared to RF’s \cifar{} results in \cref{tab:gen_and_class_gen}, we find that \textbf{\nameshort{} substantially reduces the FID gap between conditional and unconditional generation by 59\%, and even outperforms conditional RF in sFID and reconstruction error}.

See \cref{fig:gen_image_few} for some generated images. Due to space constraints, please refer to \cref{app:gen} for \textbf{more details on implementation, datasets, metrics, and additional results such as more generated images and ablation studies}.

\vspace{-0.2cm}
\section{Case Study 2: Unsupervised
 Representation Learning}
 \label{sec:repr}
\vspace{-0.2cm}

\begin{figure}[t]
    \centering

    \vspace{-0.8cm}       \includegraphics[width=0.4\linewidth]{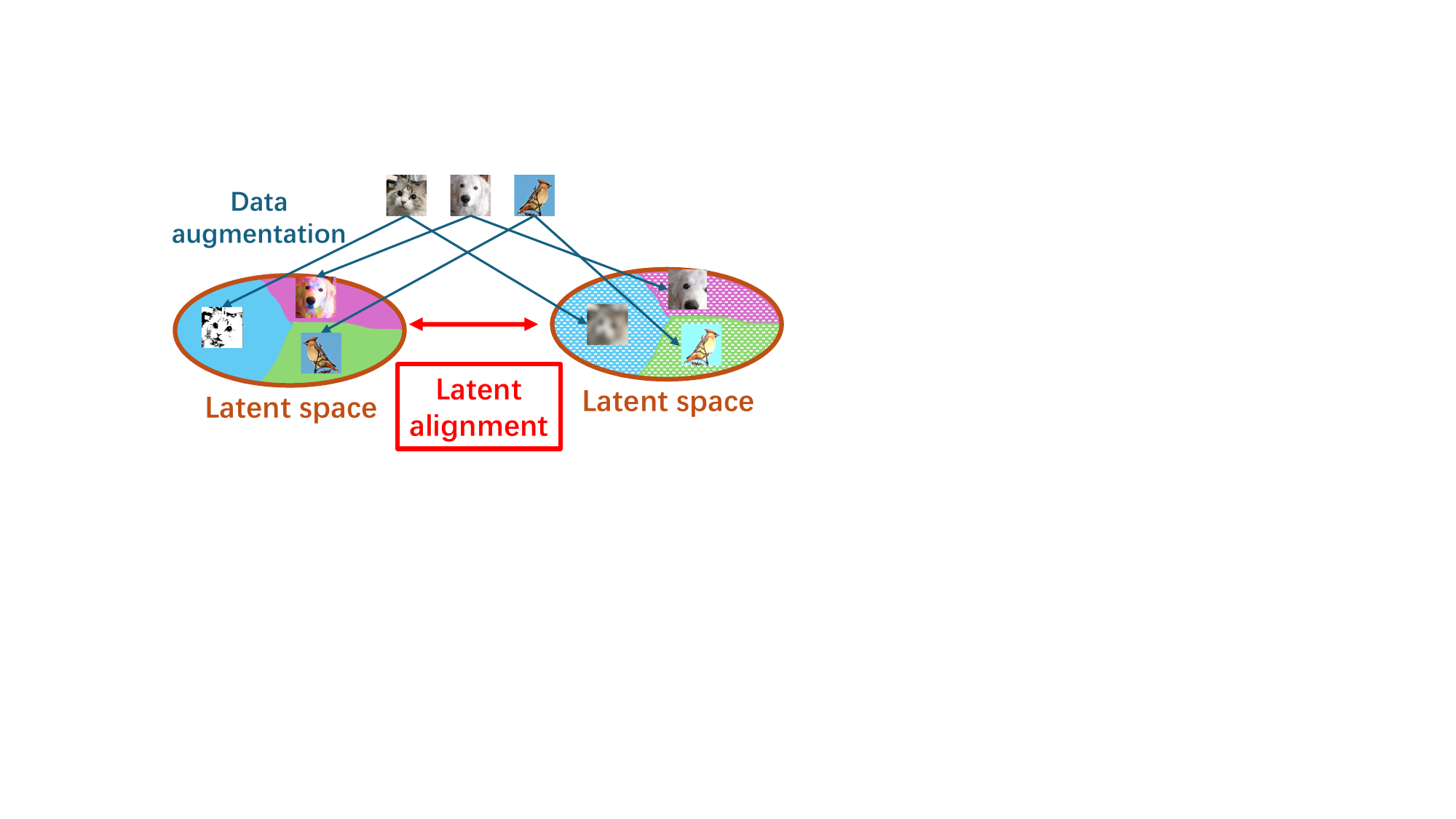}
    \caption{\nameshort{} for unsupervised representation learning (\cref{sec:repr}). During training, each image batch undergoes two sets of data augmentations, and latent zones for each set are computed \emph{using the same encoder}. We then apply latent alignment (\cref{sec:lzn_latent_alignment}) to train the encoder. At inference, we can use the \nameshort{} latents, the encoder outputs (i.e., anchor points), or intermediate encoder outputs (\cref{app:repr_choice}). The latter two options avoid the costly latent computation process.
  }
    \label{fig:use_case_2_method}
\end{figure}
 
\myparatightestn{Related work.}
Contrastive learning is a popular approach to unsupervised representation learning \cite{chen2020simple,chen2020big,he2020momentum,chen2020improved,chen2021empirical,grill2020bootstrap,chen2021exploring}, which pulls similar images (e.g., augmentations of the same image) together in the representation space. 
A central challenge is avoiding collapse, where all inputs are mapped to the same representation. Common solutions include pushing the representations of dissimilar samples from large batches \cite{chen2020simple} or memory banks \cite{he2020momentum} away, or adopting architectural designs that prevent collapse \cite{grill2020bootstrap,chen2021exploring,he2020momentum}. Other unsupervised representation learning approaches \cite{sander2022residual,li2022neural} include masked image modeling \cite{li2023mage,bao2021beit,he2022masked,pai2023masked} and training on other auxiliary tasks \cite{wei2022masked,noroozi2016unsupervised,pathak2016context,gidaris2018unsupervised}.

\myparatightestn{Approach (\cref{fig:use_case_2_method}).}
Inspired by contrastive learning, we train an image encoder using $\latentalign{\sampleset,\sampleyset}$ (\cref{sec:lzn_latent_alignment}), where $\sampleset=\brc{\sample_i}_{i=1}^n$ and $\sampleyset=\brc{\sampley_i}_{i=1}^n$ with mapping $\mapping_i = i$ contain image pairs $\bra{\sample_i,\sampley_i}$ of random augmentations of the same image. Unlike traditional contrastive methods, our approach inherently avoids collapse: different images are mapped to distinct latent zones by design, eliminating the need for large memory banks or specialized architectures. 
Notably, only a single \nameshort{} encoder for both $\sampleset$ and $\sampleyset$ is needed and decoders are not required.
See \cref{app:repr_pseudocode} for pseudocode of the training process.

\myparatightestn{Results.} We follow the canonical setting \cite{chen2020simple,grill2020bootstrap,he2020momentum}, where \nameshort{} is trained on the unlabelled \imagenet{} dataset using  ResNet-50 \cite{he2016deep} to learn representations. A \emph{linear} classifier is then trained on top of these representations in a supervised manner, and its accuracy is evaluated on the \imagenet{} test set. The results are shown in \cref{tab:repr}. Note that these results are obtained without any pretraining or use of external data. \textbf{We observe that \nameshort{} matches or outperforms several established methods in the field, including seminal approaches such as MoCo \cite{he2020momentum} (\nameshort{} outperforms it by 9.3\%) and SimCLR \cite{chen2020simple} (\nameshort{} outperforms it by 0.2\%)}. This is remarkable given that \nameshort{} is a new framework capable of supporting not only unsupervised representation learning but also other tasks (\cref{sec:gen,sec:gen_and_class}). However, there remains a significant gap to SoTA performance. \textbf{We emphasize that our current results are not fully optimized:} \textbf{(1) Training iteration.} The performance of \nameshort{} continues to improve rapidly with ongoing training (\cref{app:repr}), so we expect the gap to SoTA to narrow with full training. \textbf{(2) Architecture.} %
Prior work shows that more advanced architectures like ViT can improve the results significantly \cite{chen2021empirical}. We leave these directions for future work.

Due to space constraints, please refer to \cref{app:repr} for \textbf{more details on implementation, datasets, metrics, and additional results such as representation visualization and ablation studies}.

\begin{figure}[t]
\centering
\vspace{-0.2cm}
\begin{minipage}{.44\textwidth}
  \centering
    \captionof{table}{Classification accuracy on \imagenet{} by training a linear classifier on the unsupervised representations. Methods marked with \textsuperscript{\S} are based on contrastive learning.\protect\footnotemark{} The horizontal line separates baselines that perform worse or better than our \nameshort{}. All methods use the ResNet-50 architecture \cite{he2016deep} for fair comparison.\protect\footnotemark
}
    \label{tab:repr}
    \resizebox{0.95\linewidth}{!}{
    \begin{tabular}{c|c|c}
    \toprule
       Algorithm  & Top-1 Acc{\color{blue}$\uparrow$} &  Top-5 Acc{\color{blue}$\uparrow$} \\\midrule
       InstDisc\textsuperscript{\S} \cite{wu2018unsupervised}        & 54.0 \cite{he2020momentum}         & - \\
       BigBiGAN \cite{donahue2019large}                              & 56.6 \cite{he2020momentum}         & - \\
       LocalAgg\textsuperscript{\S} \cite{zhuang2019local}           & 58.8 \cite{he2020momentum}         & - \\
       MoCo\textsuperscript{\S} \cite{he2020momentum}                & 60.2 \cite{chen2020simple}         & - \\
       PIRL\textsuperscript{\S} \cite{misra2020self}                 & 63.6 \cite{chen2020simple}         & - \\
       CPC v2\textsuperscript{\S} \cite{henaff2020data}              & 63.8 \cite{chen2020simple}         & 85.3 \cite{chen2020simple}\\%\cline{1-2}
       CMC\textsuperscript{\S} \cite{tian2020contrastive}            & 66.2 \cite{grill2020bootstrap}     & 87.0 \cite{grill2020bootstrap} \\%\cline{3-3}
       SimSiam\textsuperscript{\S}\cite{chen2021exploring} & 68.1 \cite{chen2021exploring} & - \\
       SimCLR\textsuperscript{\S} \cite{chen2020simple}              & 69.3 \cite{chen2020simple}         & 89.0 \cite{chen2020simple}\\\hline
       MoCo v2\textsuperscript{\S} \cite{chen2020improved}           & 71.7 \cite{chen2020improved}       & - \\
       SimCLR v2\textsuperscript{\S} \cite{chen2020big}              & 71.7 \cite{chen2020big}            & - \\
       BYOL\textsuperscript{\S} \cite{grill2020bootstrap}            & 74.3 \cite{grill2020bootstrap}     & 91.6 \cite{grill2020bootstrap} \\
       DINO\textsuperscript{\S} \cite{caron2021emerging}            & 75.3 \cite{caron2021emerging}     & -\\\hline\hline
       \rowcolor{lightgray!60}
       \nameshort{}                                                  & 69.5 & 89.3\\\bottomrule
\end{tabular}
}
\end{minipage}
\hfill
\begin{minipage}{.54\textwidth}
  \centering
    \captionof{table}{Classification accuracy on \cifar{}. Baseline results are from \cite{cifaracc}. 
``RF+\nameshort{} (no gen)'' refers to ``RF+\nameshort{}'' with the RF loss for generation disabled. The horizontal line separates baselines that perform worse or better than our RF+\nameshort{}.  Note that these results refer to training purely on the \cifar{} dataset (without pretraining or external data).
}
    \label{tab:gen_and_class_class}
    \resizebox{0.53\linewidth}{!}{
    \begin{tabular}{c|c}
    \toprule
       Algorithm  & Acc{\color{blue}$\uparrow$} \\\midrule
        VGG16 & 92.64\% \\
        ResNet18 & 93.02\% \\
        ResNet50 & 93.62\% \\
        ResNet101 & 93.75\% \\
        RegNetX\_200MF & 94.24\% \\
        RegNetY\_400MF & 94.29\% \\
        MobileNetV2 & 94.43\% \\\hline
        ResNeXt29(32x4d) & 94.73\% \\
        ResNeXt29(2x64d) & 94.82\% \\
        SimpleDLA & 94.89\% \\
        DenseNet121 & 95.04\% \\
        PreActResNet18 & 95.11\% \\
        DPN92 & 95.16\% \\
        DLA & 95.47\% \\\hline\hline
        RF+\nameshort{} (no gen) & 93.59\% \\
        \rowcolor{lightgray!60} RF+\nameshort{} & 94.47\%\\\bottomrule
    \end{tabular}}
\end{minipage}%

\vspace{-0.3cm}

\end{figure}
\addtocounter{footnote}{-1}
\footnotetext{Note that we use the term \emph{contrastive learning} broadly to refer not only to methods employing the traditional contrastive loss, but to all approaches that encourage relevant images to share similar representations; see \cref{sec:repr}.}
\addtocounter{footnote}{1}
\footnotetext{It is known that better architectures \cite{chen2020big} or training on larger datasets \cite{oquab2023dinov2} can yield stronger results. To ensure a fair comparison, we include only methods reporting results with \textit{the ResNet-50 architecture} trained on \textit{the \imagenet{} dataset}. This excludes potentially stronger methods lacking ResNet-50 results on \imagenet{}, such as DINOv2 \cite{oquab2023dinov2}. See \cref{tab:repr_full} for additional baselines using other architectures.
}

\vspace{-0.2cm}
\section{Case Study 3: Conditional Generative Modeling and Classification}
\label{sec:gen_and_class}
\vspace{-0.2cm}

\begin{table*}[t]
\small
    \centering
    \caption{Conditional image generation quality on \cifar{}. The best results are in \graybox{gray box}. Applying \nameshort{} to generative models improves or matches RF on all metrics. %
    }
    \vspace{-3mm}
    \label{tab:gen_and_class_gen}
    \setlength{\tabcolsep}{3.8pt}
    \resizebox{0.6\textwidth}{!}{
    \begin{tabular}{l|ccccccc}
    \toprule
   Algo.
     & FID{\color{red}$\downarrow$} & 
     sFID{\color{red}$\downarrow$} & IS{\color{blue}$\uparrow$} & Precision{\color{blue}$\uparrow$} & Recall{\color{blue}$\uparrow$} & CMMD{\color{red}$\downarrow$} & Recon{\color{red}$\downarrow$} \\
    \hline
    RF &   2.47 & 4.05 & 9.77 & \graybox{0.71} & \graybox{0.58} & 0.0253 & 0.69
  \\
    RF+\nameshort{}  &  \graybox{2.40} & \graybox{3.99} & \graybox{9.88} & \graybox{0.71} & \graybox{0.58} & \graybox{0.0229} & \graybox{0.38}
     \\
    \bottomrule
\end{tabular}
}
\vspace{-0.3cm}
\end{table*}

\myparatightestn{Approach (\cref{fig:operation}).}
Building on \cref{sec:gen}, we consider $\sampleset = \brc{\sample_i}_{i=1}^n$ and $\sampleyset = \brc{\sampley_i}_{i=1}^m$, where each $\sample_i$ is a class label and $\sampley_i$ is an image labeled as $\sample_{\mapping_i}$. In addition to the image encoder and decoder, we introduce a label encoder-decoder pair. Since labels come from a finite set, both modules share a matrix $A \in \real^{\latentdim \times \numclass}$ of label anchor points, where $\latentdim$ is the latent dimension and $\numclass$ is the number of classes. The encoder maps a one-hot label $h$ to its anchor via $Ah$. The decoder recovers the class ID of a latent $g$ by first applying FM to obtain its anchor $\rflabel{g;1}$, and then computing its corresponding class latent zone $\arg\max \rflabel{g;1}^T A$, where $\rflabelnotation$ denotes FM over anchors in $A$. The training objective extends that of \cref{sec:gen} by adding $\latentalign{\sampleset,\sampleyset}$.
After training, the model can perform both conditional and unconditional generation, as well as classification by design.  
See \cref{app:gen_and_class_pseudocode} for detailed pseudocode of the training, generation, and classification processes.

\myparatightestn{Related work.}  
Joint classification and conditional generation are often achieved by augmenting generative models with classification losses or networks \cite{odena2017conditional,salimans2016improved}, treating label inputs to the generator and outputs from the classifier as separate components. In contrast, \nameshort{} unifies label inputs and outputs within a shared latent space.

\myparatightestn{Results.}  
Following the setting in \cref{sec:gen}, we conduct experiments on \cifar{}.  
Image quality metrics are shown in \cref{tab:gen_and_class_gen}, and classification accuracies are shown in \cref{tab:gen_and_class_class}. Key observations are:
\textbf{(1) \nameshort{} improves both image quality and reconstruction error.}  
Similar to \cref{sec:gen}, this confirms that \nameshort{} latents capture useful features for generation.
\textbf{(2) \nameshort{} achieves classification accuracy on par with SoTA.}  
\cref{tab:gen_and_class_class} includes SoTA classification accuracy from networks trained solely for classification. 
The fact that \nameshort{}, which jointly performs generation and classification and differs significantly from standard classification pipelines, can match SoTA performance is notable. 
Currently, \nameshort{} lags behind the best \cifar{} result by 1\%, potentially due to architectural factors: we use the RF encoder (\cref{app:gen_and_class}) without classification-specific optimization. 
With a better architecture design (as in other methods in \cref{tab:gen_and_class_class}), %
\nameshort{} could likely improve further.
\textbf{(3) Joint training on generation and classification improves both.}  
This is evident from: (i) RF+\nameshort{} in \cref{tab:gen_and_class_gen} showing better generation quality than in \cref{tab:gen}; and (ii) “RF+\nameshort{}” achieving higher classification accuracy than “RF+\nameshort{} (no gen)” in \cref{tab:gen_and_class_class}.  
These results support our motivation from \cref{sec:intro} that different ML tasks can benefit from each other, and demonstrate that \nameshort{} is a promising unified framework for achieving this synergy.

Due to space constraints, please refer to \cref{app:gen_and_class} for \textbf{more details on implementation, datasets, metrics, and additional results such as generated images and ablation studies}.

\vspace{-0.2cm}
\section{Limitations and Future Work}
\label{sec:discussions}
\vspace{-0.2cm}

\textbf{(1) Training efficiency.} Training \nameshort{} requires backpropagating through the FM trajectory (\cref{sec:lzn_latent_computation}), which is computationally expensive. In \cref{app:latent_computation}, we describe several optimization strategies we implemented to mitigate this cost. To further improve efficiency, we observe an interesting parallel between training \nameshort{} and training large language models (LLMs) (see \cref{app:discussions}), suggesting that some efficient training techniques developed for LLMs may be applicable here. \textbf{(2) Pure generative modeling.} While \nameshort{} is fundamentally capable of generative modeling without any auxiliary losses (see \cref{app:discussions}), in \cref{sec:gen}, we only demonstrate how it can enhance existing generative models. Exploring how to fully leverage \nameshort{} for standalone generative modeling remains an open direction for future work. \textbf{(3) Improving performance.} Although \nameshort{} achieves competitive results in unsupervised representation learning (\cref{sec:repr}) and classification (\cref{sec:gen_and_class}), there remains a gap to the SoTA. Bridging this gap is an interesting direction. One promising avenue is to incorporate well-established improvements from the literature that we have not yet adopted, such as more advanced architectural designs, as discussed in \cref{sec:repr,sec:gen_and_class}.
\textbf{(4) Multi-modality and multi-tasks.} In this paper, we focus primarily on image-based applications and at most two tasks simultaneously (\cref{sec:gen_and_class}). However, \nameshort{} is designed to be extensible: by incorporating additional encoders and decoders, it can naturally support more modalities and perform multiple tasks concurrently (\cref{fig:framework}). We leave this exploration to future work.

\section*{Acknowledgement}

The authors would like to thank Sangyun Lee for suggesting related work and for helpful discussions, as well as the anonymous reviewers for their valuable feedback. Xuefei Ning acknowledges the support by the National Natural Science Foundation of China (No. 62506197).
In addition, the authors gratefully acknowledge Cat Cookie and Cat Doudou for graciously lending their adorable faces for \cref{fig:latent,fig:operation,fig:use_case_1_method,fig:use_case_2_method}.\footnote{From \url{https://www.kaggle.com/datasets/fjxmlzn/cat-cookie-doudou} released in \cite{lin2023differentially}.}

\bibliographystyle{plain}
\bibliography{main}

\clearpage

\appendix
\startcontents[appendix]
\printcontents[appendix]{l}{1}{\section*{Appendix Contents}}
\clearpage

\section{More Details on Latent Computation (\cref{sec:lzn_latent_computation})}
\label{app:latent_computation}

\subsection{The Desired Properties}
\label{app:latent_computation_property}
In this section, we explain in more detail why the construction in \cref{sec:lzn_latent_computation} (approximately) satisfies the two desired properties.

\begin{packeditemize}
    \item \textbf{Prior distribution is Gaussian.} By definition, the distribution of latent $z\sim \uniformdistribution{\latent_1,\ldots,\latent_n}$ is induced by (1) drawing $\anchor\sim\uniformdistribution{\anchor_1,\ldots,\anchor_n}$ and $\epsilon\sim \normaldistribution{0}{\identity}$, (2) computing $s_{1-g}=(1-g)a+g\epsilon$, and (3) computing $\irf{s_{1-g}; 0}$. In the above process, $s_{1-g}\sim \pi_{1-g}$ by definition. Due to the property of FM discussed in \cref{sec:lzn_latent_computation}, $\irf{s_{1-g}; 0}\sim \pi_0$ when $s_{1-g}\sim \pi_{1-g}$. Therefore, the latent $z\sim \pi_0=\normaldistribution{0}{\identity}$.

    \item \textbf{Disjoint latent zones.} 
    Each latent point $z\sim \normaldistribution{0}{\identity}$ can be uniquely map to one of $\brc{\anchor_1,\ldots,\anchor_n}$ through the defined FM. We define the latent zone of $i$-th sample as the set of latents that map to $\anchor_i$: $Z_i=\brc{z: \rf{z;1}=\anchor_i}$. The probability that the latent computed through \cref{eq:latent_computation} falls in the incorrect latent zone can then be defined as $\prob{\irf{\anchor_i, \epsilon; 0} \in Z_j}$ for $i \ne j$, where the probability is over the randomness of $\epsilon \sim \normaldistribution{0}{\identity}$. We can see that this probability can be made arbitrarily small by choosing a sufficiently small $g$. This is because that to make the latent fall in the incorrect latent zone, we need to have $\brn{\epsilon}$ on the scale of $\brn{\frac{(1-g)(\anchor_i-\anchor_j)}{g}}$, whose probability $\to 0$ when $g\to 0$. To make this intuition more precise, we give the closed form of this probability for a toy one-dimensional case below.
\end{packeditemize}

\begin{theorem}
    Assume that there are $n=2$ samples $\sample_1,\sample_2$, with their anchor points $\anchor_1=-1,\anchor_2=1$. We have  
    \begin{align}
        \prob{\irf{\anchor_1, \epsilon; 0} \in Z_2}=
        \prob{\irf{\anchor_2, \epsilon; 0} \in Z_1}= \Phi\bra{\frac{g-1}{g}},
    \end{align}
    where $\Phi$ is the CDF function of the standard Gaussian distribution.
\end{theorem}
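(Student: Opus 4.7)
The strategy is to exploit the symmetry of the two-anchor configuration to reduce the target probability to a one-dimensional Gaussian tail. First, I would verify that with $\anchor_1 = -1$ and $\anchor_2 = +1$, the velocity field specialises to
\[
V(s,t) = \frac{-(1+s)e^{-(s+t)^2/(2(1-t)^2)} + (1-s)e^{-(s-t)^2/(2(1-t)^2)}}{(1-t)\left[e^{-(s+t)^2/(2(1-t)^2)} + e^{-(s-t)^2/(2(1-t)^2)}\right]},
\]
which is odd in $s$: the substitution $s \mapsto -s$ swaps the two exponentials and negates the numerator, while leaving the denominator unchanged. In particular $V(0,t) = 0$ for every $t \in [0, 1-g]$, so the constant trajectory $s(t) \equiv 0$ solves the FM ODE. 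Since $V$ is smooth (hence locally Lipschitz) on $[0, 1-g] \times \mathbb{R}$, uniqueness of solutions prevents any other trajectory from crossing the origin, so the sign of $s(t)$ is preserved along both the forward and the backward flow.

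Second, sign preservation together with continuity of the terminal map $\rf{\cdot;1}$ forces the two latent zones to be the two open half-lines: up to the null set $\{0\}$, one has $Z_1 = (-\infty, 0)$ and $Z_2 = (0, \infty)$. The identification of which half-line goes to which anchor is fixed by the odd symmetry of the setup, so the positive half-line must pair with $\anchor_2 = +1$. Given this, the event $\irf{\anchor_1, \epsilon; 0} \in Z_2$ reduces to asking whether the starting point $s_{1-g} = -(1-g) + g\epsilon$ is positive, since sign is preserved by the backward flow. This is exactly $\epsilon > (1-g)/g$, whose $\mathcal{N}(0,1)$-probability is $1 - \Phi((1-g)/g) = \Phi((g-1)/g)$ by the standard symmetry $\Phi(-x) = 1 - \Phi(x)$. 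The companion probability $\prob{\irf{\anchor_2, \epsilon; 0} \in Z_1}$ is computed identically from $s_{1-g} = (1-g) + g\epsilon < 0 \iff \epsilon < (g-1)/g$, yielding the same value.

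The only step requiring any real care is the uniqueness argument that underpins sign preservation. Since the FM ODE is integrated only up to time $1 - g$ rather than all the way to $t = 1$, the velocity field remains smooth and bounded on a compact set, so Picard--Lindel\"of applies directly and the degeneracy of $\pi_1$ at $t = 1$ is never encountered. Everything after that is an elementary rearrangement and one invocation of Gaussian CDF symmetry, so I expect the bulk of the write-up to consist of (i) the odd-symmetry computation and (ii) the reduction of the IFM event to the event $\{\epsilon > (1-g)/g\}$.
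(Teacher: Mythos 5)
Your proposal is correct and follows essentially the same route as the paper's proof: reduce the latent zones to the two half-lines $(-\infty,0)$ and $(0,\infty)$ by symmetry, then observe that $\irf{\anchor_1,\epsilon;0}\in Z_2$ is equivalent to $-(1-g)+g\epsilon>0$ and invoke the Gaussian CDF identity $1-\Phi(x)=\Phi(-x)$. The only difference is that where the paper simply asserts $Z_1^t=(-\infty,0)$ and $Z_2^t=(0,\infty)$ ``due to symmetry,'' you justify this step explicitly via the oddness of $V(\cdot,t)$, the invariant zero trajectory, and Picard--Lindel\"of uniqueness, which is a welcome tightening rather than a different argument.
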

\begin{proof}
    With a slight abuse of notation, we define $\rf{s; t_1, t_2}=s+\int_{\tau=t_1}^{t_2} V(s_\tau, \tau)\diff\tau$ as following the FM trajectory from $t_1$ to $t_2$. We generalize the latent zone definition above to all time steps as the latents at time step $t$ that map to the anchor point $a_i$: $Z_i^t=\brc{z: \rf{z;t,1}=\anchor_i}$.
    Due to symmetry, we know that $Z_1^t=(-\infty,0)$ and $Z_2^t=(0,\infty)$. Therefore, we have
    \begin{align*}
        \prob{\irf{\anchor_1, \epsilon; 0} \in Z_2} = \prob{-(1-g)+g\epsilon >0} = \prob{\epsilon > \frac{1-g}{g}} = \Phi\bra{\frac{g-1}{g}},
    \end{align*}
    where $\Phi\bra{\cdot}$ is the CDF function of Gaussian distribution.
    Similarly, we can get that $\prob{\irf{\anchor_2, \epsilon; 0} \in Z_1}=\Phi\bra{\frac{g-1}{g}}$.
\end{proof}

\subsection{More Implementation Details}
\label{app:latent_computation_implementation}

We find that in the training or inference of some tasks benefit from using a more concentrated latent distribution:
\begin{align}
    \latent_i=\latentcomputation{\sample_1,\ldots,\sample_n}_i \triangleq \irf{a_i, \latentscale \epsilon_i; 0}, \label{eq:latentscale}
\end{align}
where $\epsilon_i \sim \mathcal{N}(0, \mathbf{I})$ and $\alpha\in[0,1)$ is the scaling factor. Similar techniques have been used in prior generative models for improving sample quality \cite{karras2019style,kingma2018glow,brock2018large,vahdat2020nvae}.

\subsection{Efficiency Optimizations}
\label{app:latent_computation_efficiency}
We introduce a series of efficiency optimization techniques so that the training of \nameshort{} can scale up to large models and large batch sizes.

\myparatightestn{Minibatch approximation (reducing memory and computation cost).}  
By design, latent computation requires using \emph{all} samples, which is infeasible for large datasets. In practice, we approximate this by using only the current minibatch as $\sample_1,\ldots,\sample_n$, which significantly reduces memory and computation cost. 

Note that this approximation has nuanced
 implications on the two desired properties discussed in \cref{{sec:lzn_latent_computation}}.
\begin{packeditemize}
    \item \textbf{Minibatch approximation still preserves the Gaussian prior.} \nameshort{} ensures that the latent distribution within each minibatch is approximately $\normaldistribution{0}{\identity}$. As a result, the overall latent distribution becomes a mixture of Gaussians with the same parameters, which is still $\normaldistribution{0}{\identity}$. Therefore, the global prior remains valid under the minibatch approximation. 
    \item \textbf{However, minibatch approximation violates the disjoint zone property.} This is because the latent zones of each sample now depends on other samples in the same batch, which could change across different batches. Despite this approximation, our experiments show it performs well.
    \begin{packeditemize}
        \item In generative modeling (\cref{sec:gen,sec:gen_and_class}), the latent does not need perfect zone disjointness—as long as it provides some information about the input sample, it can help reduce the variance needed to learn by the generative model (rectified flow in our case) and improve the generation quality.
        \item In representation learning (\cref{sec:repr}), latent alignment occurs \emph{within a single batch}. Thus, inconsistency across batches is irrelevant.
        \item In classification (\cref{sec:gen_and_class}), we only need to map samples \emph{within a single batch} to the latent zones of labels. Thus, inconsistency across batches is irrelevant.
    \end{packeditemize}
    That said, \emph{larger batch sizes} can improve the accuracy of latent zones and thus improve performance (\cref{sec:gen_and_class}).
\end{packeditemize}

\myparatightestn{Custom gradient checkpointing (reducing memory cost).}  
In PyTorch, forward passes store intermediate results for use in backpropagation, incurring significant memory cost. Gradient checkpointing\footnote{\url{https://pytorch.org/docs/stable/checkpoint.html}} reduces memory usage (with the cost of extra computation) by selectively discarding intermediates in the forward pass and recomputing them during the backward pass. This technique is typically applied within neural networks. In our case, we discover that the main memory bottleneck lies in latent computation, which has memory complexity $\bigO(n^2\latentdim\solversteps)$, where $n$ is the number of samples, $\latentdim$ the latent dimension, and $\solversteps$ the solver steps. We design a custom strategy that skips storing velocity computations and retains only the latent trajectories $s_t$. This reduces memory complexity to $\bigO(n\latentdim\solversteps)$, which makes the training far more manageable.

\myparatightestn{Latent parallelism (making training scalable with multi-GPU).}
For the same reason discussed above, the main computation overhead also lies in latent computation. A natural idea is to parallelize it with multi-GPU. We partition the data samples across GPUs, and each GPU computes anchor points for its assigned subset. These anchor points are then broadcast to all GPUs, allowing each to compute latents for its own samples using the complete set of anchors. To ensure that gradients can propagate back correctly through the anchor points to the originating GPUs, we use the undocumented PyTorch function \texttt{torch.distributed.nn.functional.all\_gather}, which—unlike the standard \texttt{torch.distributed.all\_gather}—maintains gradient flow to the original sources.

\FloatBarrier
\section{More Details on Latent Alignment (\cref{sec:lzn_latent_alignment})}
\label{app:latent_alignment}

\subsection{More Implementation Details}
\label{app:latent_alignment_implementation}

Optionally, we can apply a logarithm to the assignment probability to make the loss resemble a standard cross-entropy formulation.
In that case, our proposed alignment objective is:
\begin{align}
    \latentalign{\sampleset,\sampleyset} \triangleq \max \sum_{i=1}^m \max_{t \in \brc{t_\solverstepsstart,\ldots,t_\solversteps}} \log \probassign{\anchor_{\mapping_i}}{s_t^i}.
    \label{eq:align_with_log}
\end{align}

\subsection{Efficiency Optimizations}
\label{app:latent_alignment_efficiency}

We apply the same efficiency optimizations in \cref{app:latent_computation_efficiency} in latent alignment.
\FloatBarrier
\section{More Details and Results on Case Study 1}
\label{app:gen}

\subsection{Algorithm Pseudocode}
\label{app:gen_pseudocode}
\begin{figure}[t]
    \begin{minipage}{0.48\textwidth} %
\begin{algorithm}[H]
    \DontPrintSemicolon
    \LinesNumbered
	\BlankLine
	\SetKwInOut{Input}{Input}
	\SetKwInOut{Output}{Output}
	\caption{RF training}
	\Input{Training set: $\sampleset$\\
 Decoder: $\decodernotation$\\
 Number of iterations: $T$\\
 Batch size: $B$
	}
	\BlankLine
        \For{iteration $\leftarrow 1,\ldots,T$}{
            $\sample_1,\ldots,\sample_B \leftarrow$ Draw samples from $\sampleset$\\
            $\epsilon_1,\ldots,\epsilon_B \leftarrow$ Gaussian noise\\
            $t_1,\ldots,t_B \leftarrow$ Random RF timesteps\\
            $\xi_i\leftarrow (1-t_i)\epsilon_i + t_i \sample_i$\\
            Training using $\decoder{\xi_i}$
        }
\end{algorithm}
    \end{minipage}
    \hfill %
    \begin{minipage}{0.48\textwidth}
\begin{algorithm}[H]
    \DontPrintSemicolon
    \LinesNumbered
	\BlankLine
	\SetKwInOut{Input}{Input}
	\SetKwInOut{Output}{Output}
	\caption{RF+\nameshort{} training}
	\Input{Training set: $\sampleset$\\
 Decoder: $\decodernotation$\\
 \graybox{Encoder: $\encodernotation$ (used by $\latentcomputationnotation$)}\\
 Number of iterations: $T$\\
 Batch size: $B$
	}
	\BlankLine
        \For{iteration $\leftarrow 1,\ldots,T$}{
            $\sample_1,\ldots,\sample_B \leftarrow$ Draw samples from $\sampleset$\\
            \graybox{$\latent_1,\ldots,\latent_B \leftarrow \latentcomputation{\sample_1,\ldots,\sample_B}$}\\
            $\epsilon_1,\ldots,\epsilon_B \leftarrow$ Gaussian noise\\
            $t_1,\ldots,t_B \leftarrow$ Random RF timesteps\\
            $\xi_i\leftarrow (1-t_i)\epsilon_i + t_i \sample_i$\\
            Training using $\decodernotation(\xi_i;$\graybox{$\latent_i$}$)$
        }
\end{algorithm}
    \end{minipage}
    \caption{\textbf{Comparison between the training processes of RF and RF+\nameshort{}.} 
\textbf{Left:} A simplified illustration of the standard RF \cite{liu2022flow} training process. In each iteration, a batch of real samples and a batch of Gaussian noise are drawn and interpolated to produce noisy inputs, which are then passed through the decoder network to compute the loss. 
\textbf{Right:} A simplified illustration of the RF+\nameshort{} training process. The key differences are highlighted in gray: we compute \nameshort{} latents for the samples using the method in \cref{sec:lzn_latent_computation}, and provide these latents as an additional input to the RF decoder.
 }
    \label{fig:alg_gen_training}
\end{figure}
\begin{figure}[t]
    \begin{minipage}{0.48\textwidth} %
\begin{algorithm}[H]
    \DontPrintSemicolon
    \LinesNumbered
	\BlankLine
	\SetKwInOut{Input}{Input}
	\SetKwInOut{Output}{Output}
	\caption{RF generation}
	\Input{Decoder: $\decodernotation$
	}
	\BlankLine
        $\xi\leftarrow$ Gaussian noise\\
        Generated sample $\leftarrow \decoder{\xi}$
\end{algorithm}
    \end{minipage}
    \hfill %
    \begin{minipage}{0.48\textwidth}
\begin{algorithm}[H]
    \DontPrintSemicolon
    \LinesNumbered
	\BlankLine
	\SetKwInOut{Input}{Input}
	\SetKwInOut{Output}{Output}
	\caption{RF+\nameshort{} generation}
	\Input{Decoder: $\decodernotation$
	}
	\BlankLine
        $\xi\leftarrow$ Gaussian noise\\
        \graybox{$\latent\leftarrow$ Gaussian noise}\\
        Generated sample $\leftarrow \decodernotation(\xi;$\graybox{$\latent$}$)$
\end{algorithm}
    \end{minipage}
    \caption{\textbf{Comparison between the generation processes of RF and RF+\nameshort{}.}  
\textbf{Left:} A simple illustration of the standard RF generation process \cite{liu2022flow}. The decoder takes Gaussian noise as input and generates a sample. The actual process is iterative, but we leave out the steps for simplicity and only show the starting input (Gaussian noise) and the final output (the generated image).  
\textbf{Right:} A simple illustration of the RF+\nameshort{} generation process. The main differences are shown in gray: we sample extra \nameshort{} latents from Gaussian noise and use them as additional inputs to the RF decoder during the iterative generation process.
}
    \label{fig:alg_gen_gen}
\end{figure}

\cref{fig:alg_gen_training} and \cref{fig:alg_gen_gen} show side-by-side comparisons of the training and generation processes of RF and RF+\nameshort{}.

\subsection{More Implementation Details}

\myparatightestn{Architecture.}

\begin{packeditemize}
    \item \textbf{Decoder.} The only change to the RF architecture \cite{liu2022flow} is concatenating the \nameshort{} latent with the timestep embedding.
    \item \textbf{Encoder.} We extend the UNet encoder in RF \cite{liu2022flow} by connecting the output of each ResNet block with a latent transformation block. The sum of the outputs of the latent transformation blocks forms the \nameshort{} latent. Each latent transformation block consists of: (1) a $1\!\times\!1$ convolution that projects the ResNet output to 20 channels, reducing dimensionality; and (2) a small MLP with a 200-dimensional hidden layer that outputs the latent from the flattened convolution output.
\end{packeditemize}

\subsection{More Experimental Settings}

\myparatightestn{Datasets.}
\begin{packeditemize}
    \item \textbf{\cifar{} \threetwo{} \cite{krizhevsky2009learning}} contains 50000 training images and 10000 test images of 10 classes of objects. We only utilize the training set for this experiment. 
    \item \textbf{\afhqcat{} \twofivesix{} \cite{choi2020stargan}} contains 5153  cat%
    images. %
    \item \textbf{\celebahq{} \twofivesix{} \cite{karras2017progressive}} contains 30000 face images. %
    \item \textbf{\lsunbedroom{} \twofivesix{} \cite{yu2015lsun}} contains 3033042 bedroom %
    images. %
\end{packeditemize}

\myparatightestn{Metrics.}

\begin{packeditemize}
    \item \textbf{FID \cite{heusel2017gans} and sFID \cite{nash2021generating}} evaluate the similarity between real and generated images by projecting both into the latent space of a pretrained network (e.g., Inception-v3 \cite{szegedy2016rethinking}), fitting each set of latents with Gaussian distributions, and computing their Wasserstein-2 distance. The key difference between FID and sFID is the feature layer used: FID uses pooled features, while sFID uses intermediate features, making it more sensitive to spatial details.
    \item \textbf{Inception score (IS) \cite{salimans2016improved}} measures image quality by assessing both the quality of each image (how confidently a classifier predicts a class) and diversity across all images (coverage over different classes). Since the classifier is trained on \imagenet{} \cite{deng2009imagenet}, IS is best suited for natural image datasets like \cifar{}. We report IS for all datasets for completeness.
    \item \textbf{Precision and recall \cite{kynkaanniemi2019improved}} evaluate the quality and coverage of generated images.  Intuitively, precision measures the fraction of generated images that are close to real ones, while recall measures the fraction of real images that are close to the generated ones.
    \item \textbf{CMMD \cite{jayasumana2024rethinking}} measure the MMD distances between the CLIP embeddings of the real images and generated images. Compared to FID, it is reported to align better with human preference and have better sample efficiency.
    \item \textbf{Reconstruction error} measures how well a generative model can reconstruct an input image.  This reflects the model’s representational power and is crucial for applications like image editing, where edits are made by modifying the image's latent representation \cite{}.  For RF, we first apply the inverse ODE to map the image to its latent representation, then use the forward ODE to reconstruct the image, and compute the $\ell_2$ distance between the original and reconstructed images.  For RF+\nameshort{}, we add an initial step: compute the image’s \nameshort{} latent $\latentcomputation{\sampleset}$ %
    and feed it into the RF latent computation process as an additional input.

\end{packeditemize}
Following the convention \cite{liu2022flow,song2023consistency,lin2018pacgan,zhao2024flasheval,lin2021spectral}, the metrics are all computed using the training set of the dataset. 

For FID, sFID, IS, precision, recall, and CMMD, we subsample the training set and generate the same number of samples to compute the metrics. The number of samples are:
\begin{packeditemize}
    \item \cifar{}: 50000 (the whole training set).
    \item \afhqcat{}: 5120, the largest multiple of the batch size (256) that is less than or equal to the training set size (5153).
    \item \celebahq{}: 29952, the largest multiple of the batch size (256) that is less than or equal to the training set size (30000).
    \item \lsunbedroom{}: 29952, the largest multiple of the batch size (256) that is less than or equal to 30000. We limit the number of samples to 30000 so that the computation cost of the metrics are reasonable. 
\end{packeditemize}

For reconstruction error, we randomly sample a batch of images (2000 for \cifar{} and 256 for the other datasets) from the training set. Each image is reconstructed 20 times (note that the \nameshort{} latents $\latentcomputation{\sampleset}$ have randomness). We report the average metric over all reconstructions.

Note that for all the random subsampling procedures mentioned above, we ensure the sampled sets are consistent between RF and RF+\nameshort{}, so that the resulting metrics are directly comparable.

\myparatightestn{Sampler.} RF requires a sampler to numerically solve the ODE (integral) trajectory for sample generation.  
For both RF and RF+\nameshort{}, we use the RK45 sampler from RF \cite{liu2022flow}, which adaptively determines the number of steps.  
In \cref{app:gen_results}, we also analyze the effect of varying the number of sampling steps using the Euler sampler \cite{}.

\myparatightestn{Hyperparameters.} 
\begin{packeditemize}
    \item \cifar{}
    \begin{packeditemize}
        \item RF:
        \begin{packeditemize}
            \item Batch size: 2000
            \item Optimizer: Adam
            \item Decoder learning rate: 0.001
            \item Gradient clipping: 1.0
            \item Number of parameters in decoder: 61804419
        \end{packeditemize}
        \item RF+\nameshort{}:
        \begin{packeditemize}
            \item Batch size: 2000
            \item Optimizer: Adam
            \item Decoder learning rate: 0.001
            \item Encoder learning rate: 0.000025
            \item Gradient clipping: 1.0
            \item Latent dimension: 200
            \item Number of parameters in decoder: 61906819
            \item Number of parameters in encoder: 49790260
        \end{packeditemize}
    \end{packeditemize}

    \item \afhqcat{}
    \begin{packeditemize}
        \item RF:
        \begin{packeditemize}
            \item Batch size: 256
            \item Optimizer: Adam
            \item Decoder learning rate: 0.0002
            \item Gradient clipping: 1.0
            \item Number of parameters in decoder: 65574549
        \end{packeditemize}
        \item RF+\nameshort{}:
        \begin{packeditemize}
            \item Batch size: 256
            \item Optimizer: Adam
            \item Decoder learning rate: 0.0002
            \item Encoder learning rate: 0.000002
            \item Gradient clipping: 1.0
            \item Latent dimension: 200
            \item Number of parameters in decoder: 65676949
            \item Number of parameters in encoder: 87768896
        \end{packeditemize}
    \end{packeditemize}

    \item \celebahq{}
    \begin{packeditemize}
        \item RF:
        \begin{packeditemize}
            \item Batch size: 256
            \item Optimizer: Adam
            \item Decoder learning rate: 0.0002
            \item Gradient clipping: 1.0
            \item Number of parameters in decoder: 65574549
        \end{packeditemize}
        \item RF+\nameshort{}:
        \begin{packeditemize}
            \item Batch size: 256
            \item Optimizer: Adam
            \item Decoder learning rate: 0.0002
            \item Encoder learning rate: 0.000004
            \item Gradient clipping: 1.0
            \item Latent dimension: 200
            \item Number of parameters in decoder: 65676949
            \item Number of parameters in encoder: 87768896
        \end{packeditemize}
    \end{packeditemize}

    \item \lsunbedroom{}
    \begin{packeditemize}
        \item RF:
        \begin{packeditemize}
            \item Batch size: 256
            \item Optimizer: Adam
            \item Decoder learning rate: 0.0002
            \item Gradient clipping: 1.0
            \item Number of parameters in decoder: 65574549
        \end{packeditemize}
        \item RF+\nameshort{}:
        \begin{packeditemize}
            \item Batch size: 256
            \item Optimizer: Adam
            \item Decoder learning rate: 0.0002
            \item Encoder learning rate: 0.000002
            \item Gradient clipping: 1.0
            \item Latent dimension: 200
            \item Number of parameters in decoder: 65676949
            \item Number of parameters in encoder: 87768896
        \end{packeditemize}
    \end{packeditemize}
    
\end{packeditemize}

\myparatightestn{Computation cost.} 
Excluding the computation cost of periodic evaluation (i.e., only counting the computation cost of model training), each RF+\nameshort{} experiment takes:
\begin{packeditemize}
    \item \cifar{}: 8 hours on 16 A100 (40 GB) GPUs.
    \item \afhqcat{}: 10 hours on 32 A100 (40 GB) GPUs.
    \item \celebahq{}: 58 hours on 32 A100 (40 GB) GPUs.
    \item \lsunbedroom{}: 341 hours on 32 A100 (40 GB) GPUs.
\end{packeditemize}

\subsection{More Results}
\label{app:gen_results}

\myparatightestn{Generated images.} The generated images of RF and RF+\nameshort{} are in \cref{fig:gen_images_rf_cifar,fig:gen_images_lzn_cifar,fig:gen_images_rf_afhqcat,fig:gen_images_lzn_afhqcat,fig:gen_images_rf_celebahq,fig:gen_images_lzn_celebahq,fig:gen_images_rf_lsunbedroom,fig:gen_images_lzn_lsunbedroom}.

\begin{figure}[t]
    \centering
    \includegraphics[width=0.75\linewidth]{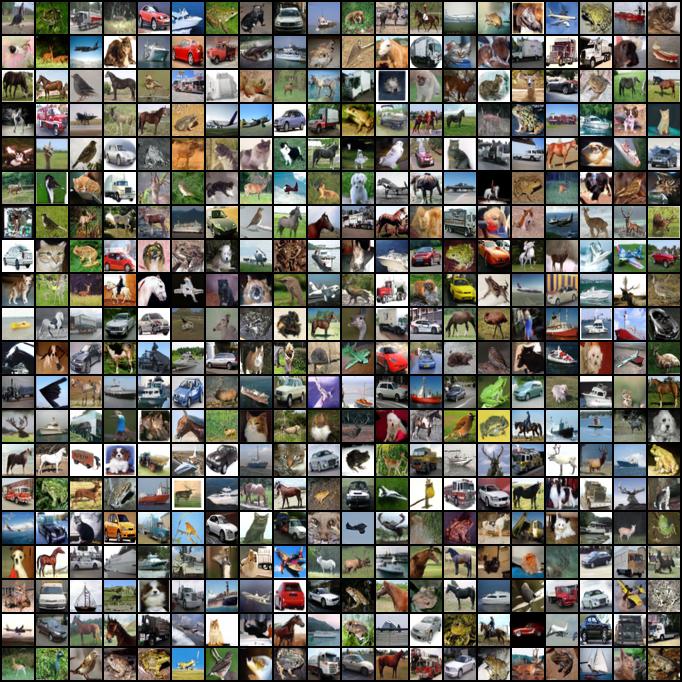}
    \caption{Generated images of RF on \cifar{}.}
    \label{fig:gen_images_rf_cifar}
\end{figure}

\begin{figure}[t]
    \centering
    \includegraphics[width=0.75\linewidth]{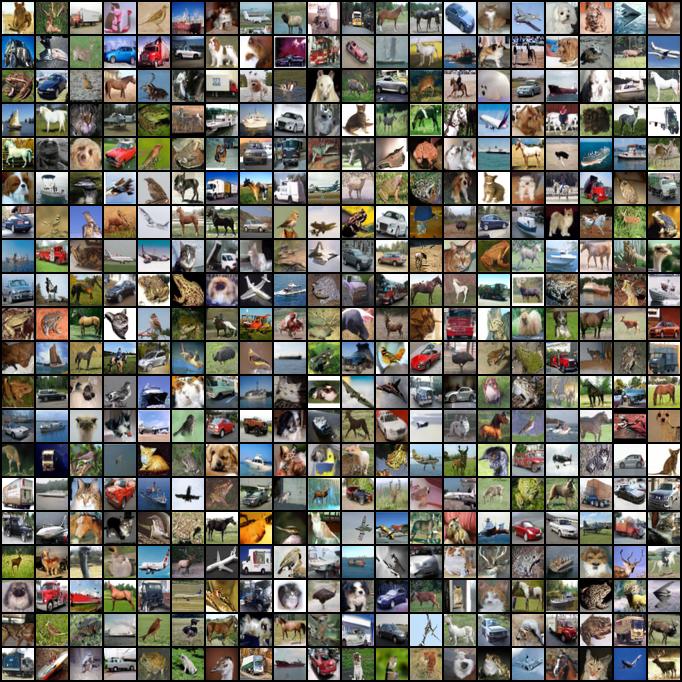}
    \caption{Generated images of RF+\nameshort{} on \cifar{}.}
    \label{fig:gen_images_lzn_cifar}
\end{figure}

\begin{figure}[t]
    \centering
    \includegraphics[width=0.75\linewidth]{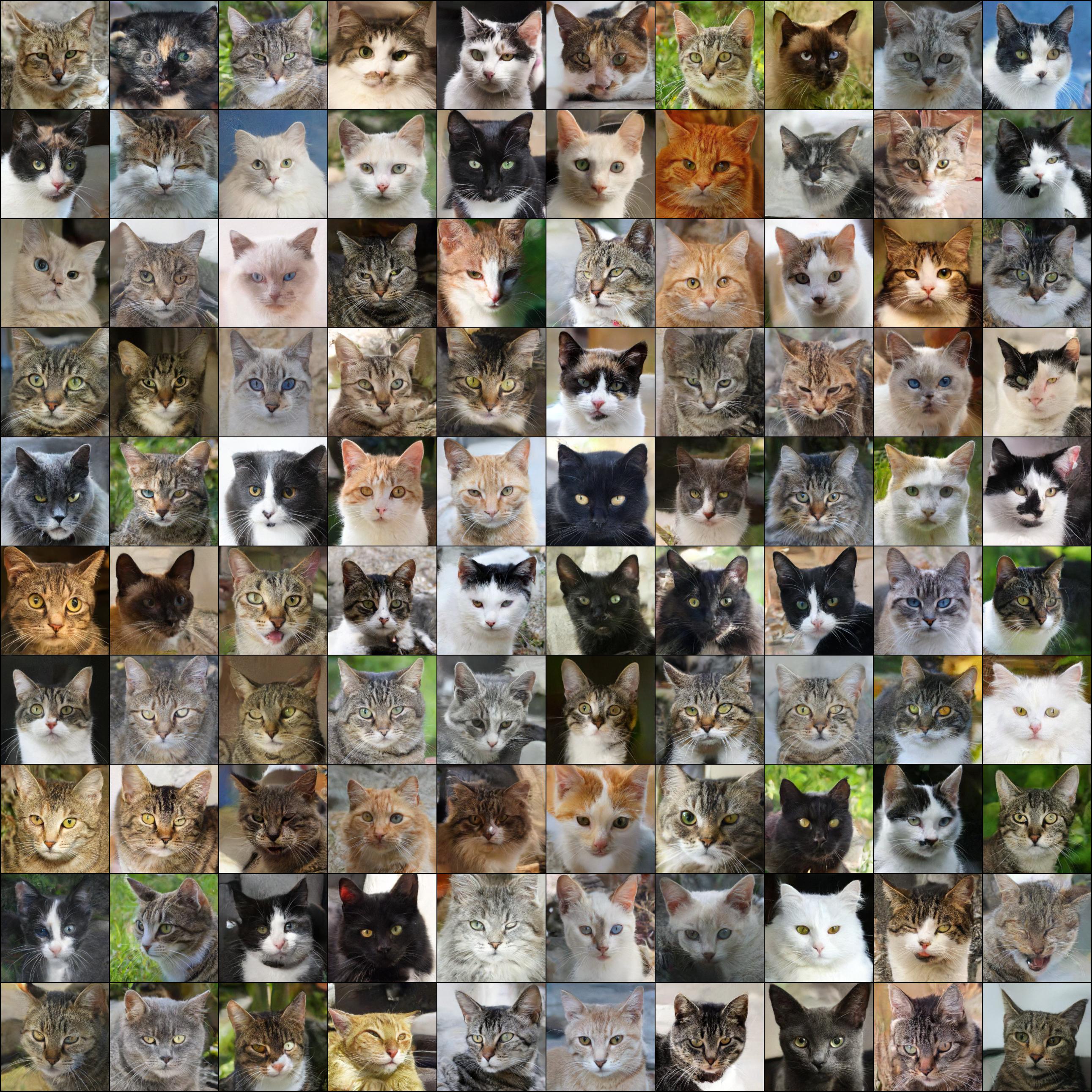}
    \caption{Generated images of RF on \afhqcat{}.}
    \label{fig:gen_images_rf_afhqcat}
\end{figure}

\begin{figure}[t]
    \centering
    \includegraphics[width=0.75\linewidth]{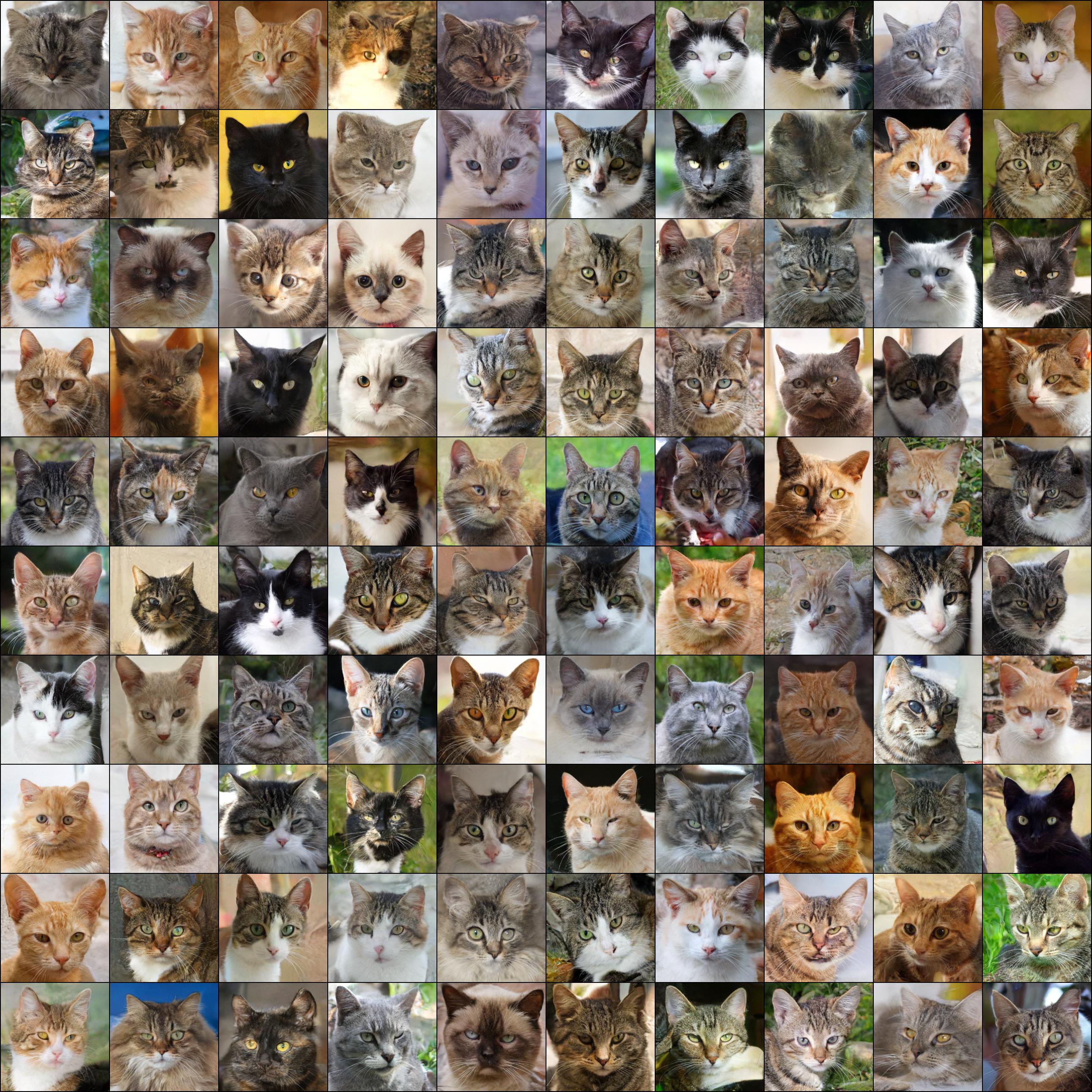}
    \caption{Generated images of RF+\nameshort{} on \afhqcat{}.}
    \label{fig:gen_images_lzn_celebahq}
\end{figure}

\begin{figure}[t]
    \centering
    \includegraphics[width=0.75\linewidth]{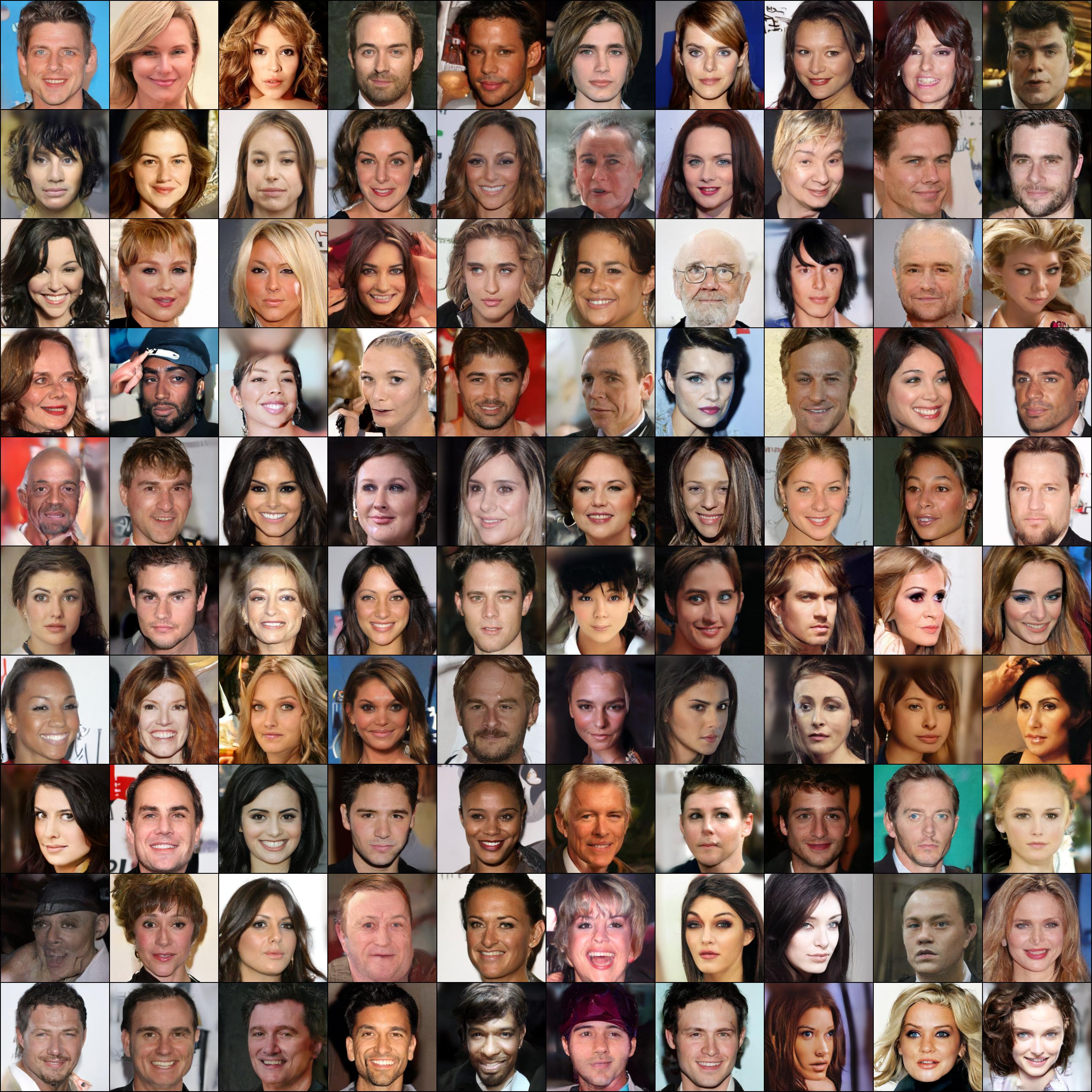}
    \caption{Generated images of RF on \celebahq{}.}
    \label{fig:gen_images_rf_celebahq}
\end{figure}

\begin{figure}[t]
    \centering
    \includegraphics[width=0.75\linewidth]{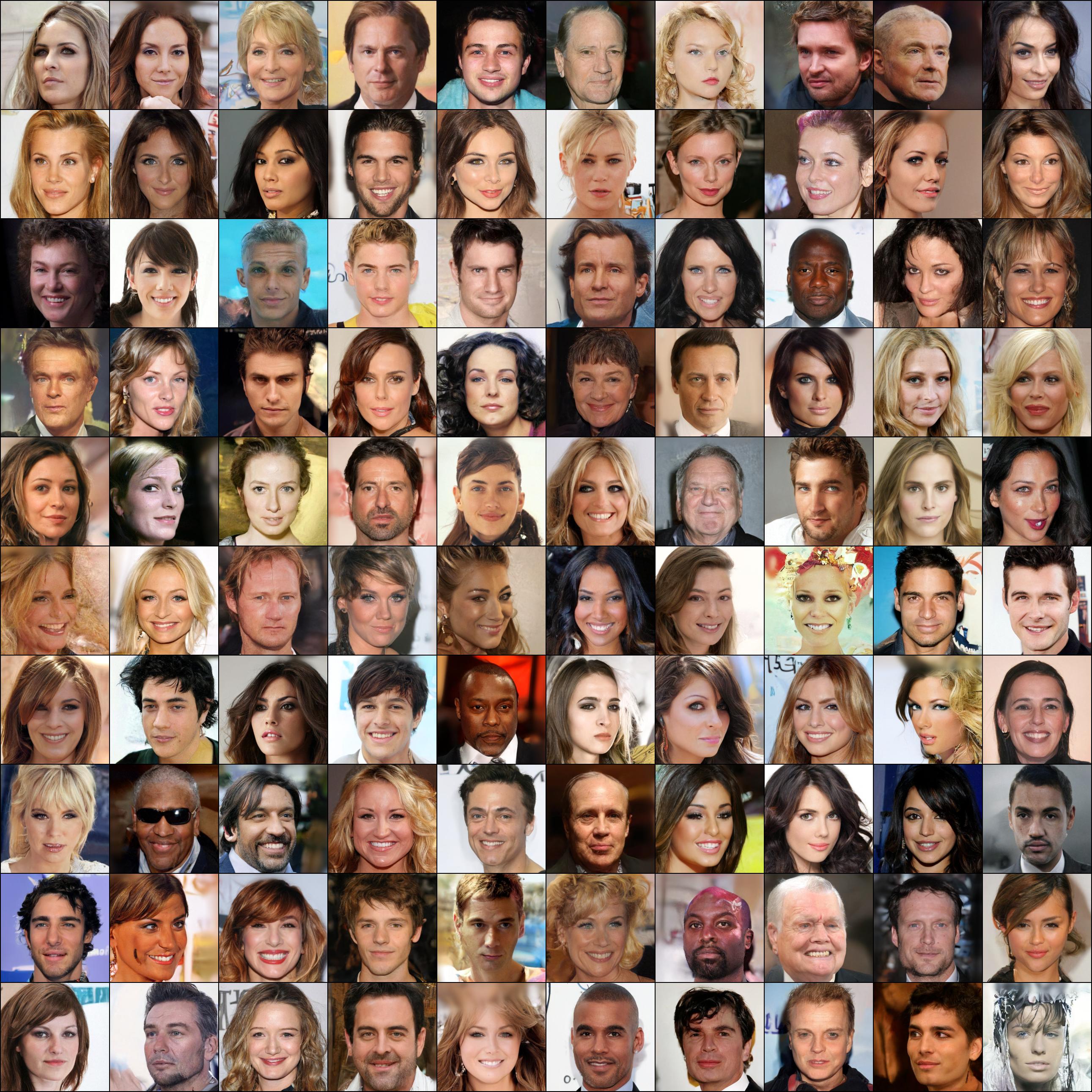}
    \caption{Generated images of RF+\nameshort{} on \celebahq{}.}
    \label{fig:gen_images_lzn_afhqcat}
\end{figure}

\begin{figure}[t]
    \centering
    \includegraphics[width=0.75\linewidth]{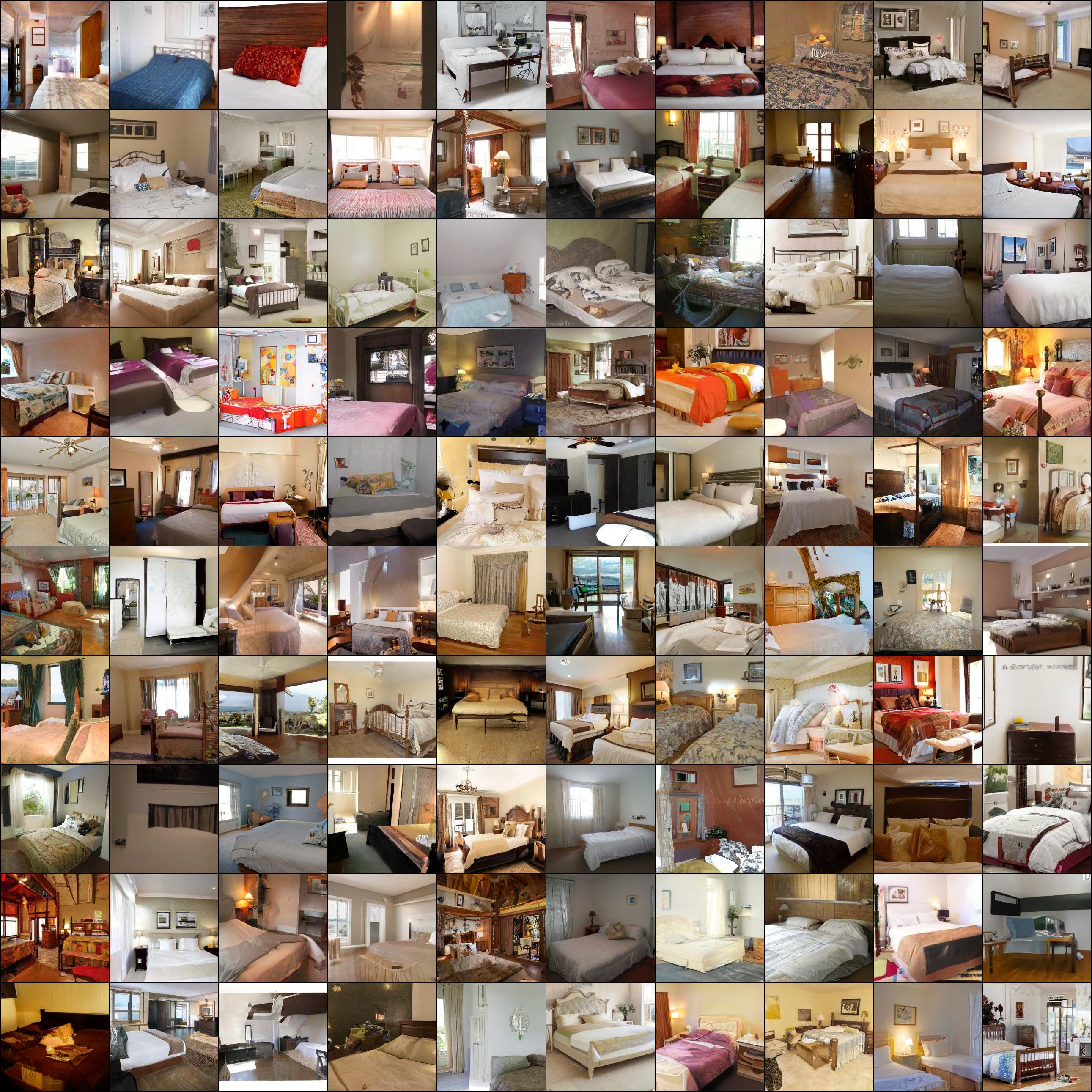}
    \caption{Generated images of RF on \lsunbedroom{}.}
    \label{fig:gen_images_rf_lsunbedroom}
\end{figure}

\begin{figure}[t]
    \centering
    \includegraphics[width=0.75\linewidth]{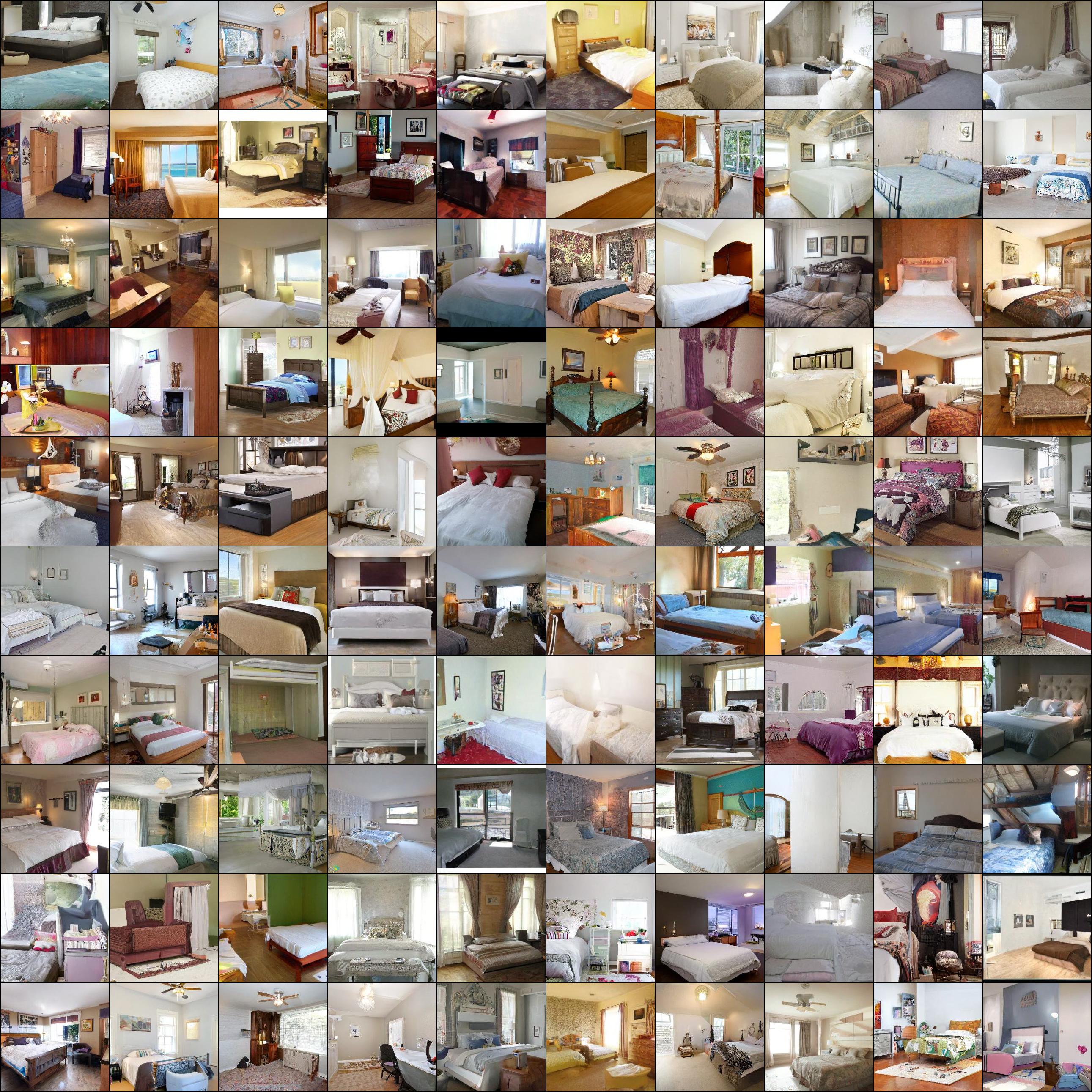}
    \caption{Generated images of RF+\nameshort{} on \lsunbedroom{}.}
    \label{fig:gen_images_lzn_lsunbedroom}
\end{figure}

\myparatightestn{Ablation studies on FID implementation.}
It is known that subtle differences in FID implementation can result in different results \cite{parmar2022aliased}. In our main experiments, we use the implementation in consistency models \cite{song2023consistency}. In \cref{tab:gen_fid}, we additionally show the FID using two other implementations: RF \cite{liu2022flow} and clean FID \cite{parmar2022aliased}. We can see that, while the numbers are different, the relative ranking across all three implementations is consistent. Especially, RF+\nameshort{} achieves the best FID in three out of four datasets.

\begin{table*}[t]
\small
    \centering
    \caption{FID with different implementations for unconditional image generation. ``CM'' denotes consistency models \cite{song2023consistency}; ``RF'' denotes Rectified Flow \cite{liu2022flow}; ``clean'' denotes clean FID \cite{parmar2022aliased}. The best results are in \graybox{gray box}.}
    \label{tab:gen_fid}
    \setlength{\tabcolsep}{3.8pt}
    \resizebox{0.75\textwidth}{!}{
    \begin{tabular}{l|ccc|ccc}
    \toprule
    \multirow{2}{*}{Algo.} & \multicolumn{3}{c|}{{\cifar{} \threetwo{}}} & \multicolumn{3}{c}{{\afhqcat{} \twofivesix{}}}\\
    \cline{2-7}
     & FID (clean){\color{red}$\downarrow$} & 
     FID (RF){\color{red}$\downarrow$} & FID (CM){\color{red}$\downarrow$} & FID (clean){\color{red}$\downarrow$} & 
     FID (RF){\color{red}$\downarrow$} & FID (CM){\color{red}$\downarrow$}\\
    \hline
    RF &   3.18 & 2.77 & 2.76  &  5.99 & 6.20 & 6.08  \\
    RF+\nameshort{}  &   \graybox{3.05} & \graybox{2.61} & \graybox{2.59}  &  \graybox{5.66} & \graybox{5.69} & \graybox{5.68} \\
    \bottomrule
    \toprule
    \multirow{2}{*}{Algo.} & \multicolumn{3}{c|}{{\celebahq{} \twofivesix{}}} & \multicolumn{3}{c}{{\lsunbedroom{} \twofivesix{}}}\\
    \cline{2-7}
     & FID (clean){\color{red}$\downarrow$} & 
     FID (RF){\color{red}$\downarrow$} & FID (CM){\color{red}$\downarrow$} & FID (clean){\color{red}$\downarrow$} & 
     FID (RF){\color{red}$\downarrow$} & FID (CM){\color{red}$\downarrow$}\\
    \hline
    RF &   \graybox{7.10} & \graybox{7.00} & \graybox{6.95}  &  6.39 & 6.25 & 6.25   \\
    RF+\nameshort{}  &   7.31 & 7.23 & 7.17   &  \graybox{5.88} & \graybox{5.87} & \graybox{5.95} \\
    \bottomrule
\end{tabular}
}
\end{table*}

\myparatightestn{Ablation studies on sampling steps.}  
In this experiment, we use the Euler sampler with varying numbers of sampling steps. As shown in \cref{fig:gen_fid_vs_nfe}, RF+\nameshort{} generally achieves better FID than the RF baseline across most settings. Notably, in the only case where RF+\nameshort{} performs worse than RF in \cref{tab:gen}, we observe that the underperformance occurs only at the highest number of sampling steps in the Euler sampler (\cref{fig:gen_fid_vs_nfe_celebahq}).

\begin{figure*}[!t]
\centering

\begin{subfigure}{.45\textwidth}
\includegraphics[width=1\textwidth]{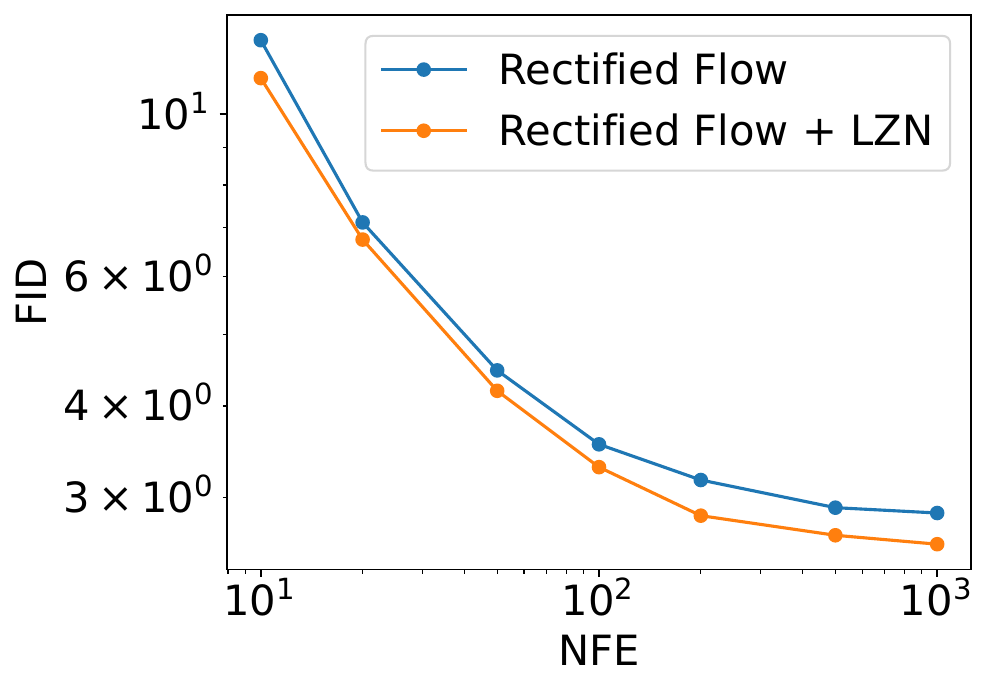}
\caption{\cifar{}.}
\label{fig:gen_fid_vs_nfe_cifar}
\end{subfigure}
\begin{subfigure}{.45\textwidth}
\includegraphics[width=1\textwidth]{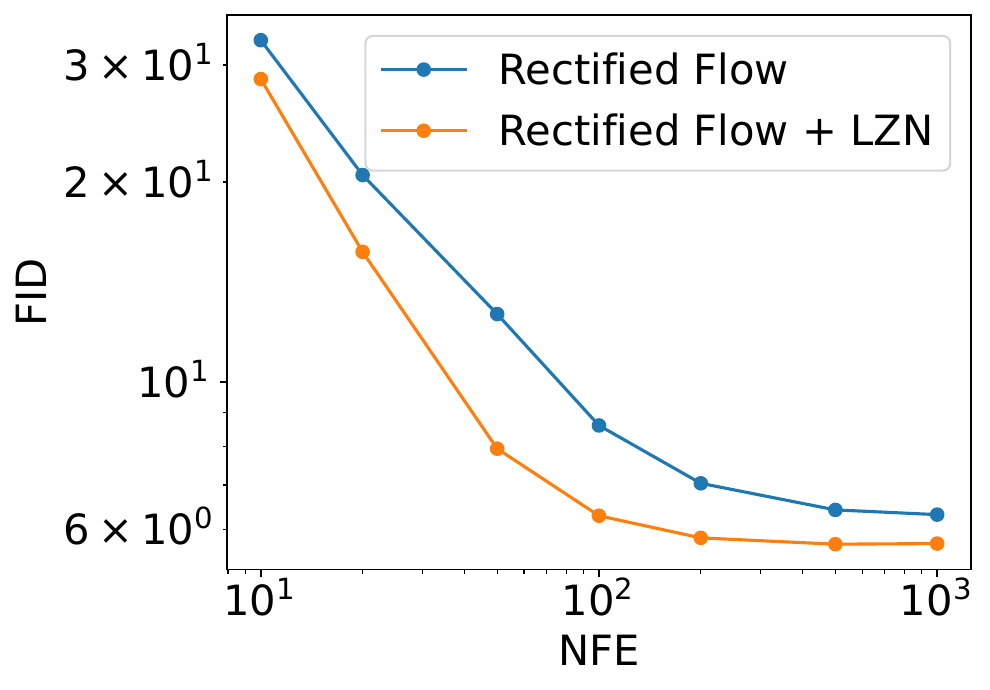}
\caption{\afhqcat{}.}
\label{fig:gen_fid_vs_nfe_afhqcat}
\end{subfigure}
\begin{subfigure}{.45\textwidth}
\includegraphics[width=1\textwidth]{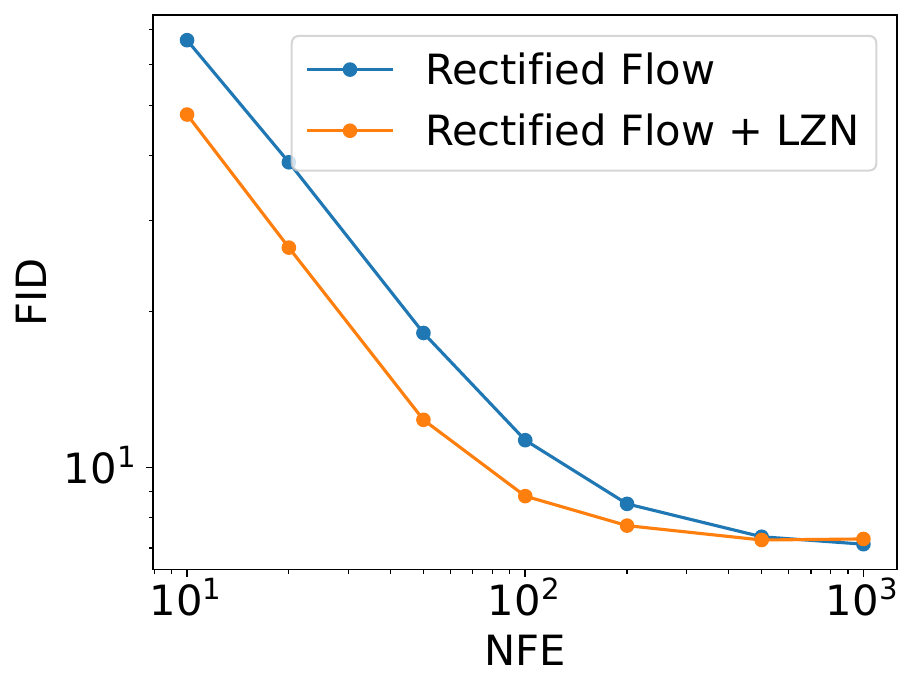}
\caption{\celebahq{}.}
\label{fig:gen_fid_vs_nfe_celebahq}
\end{subfigure}
\begin{subfigure}{.45\textwidth}
\includegraphics[width=1\textwidth]{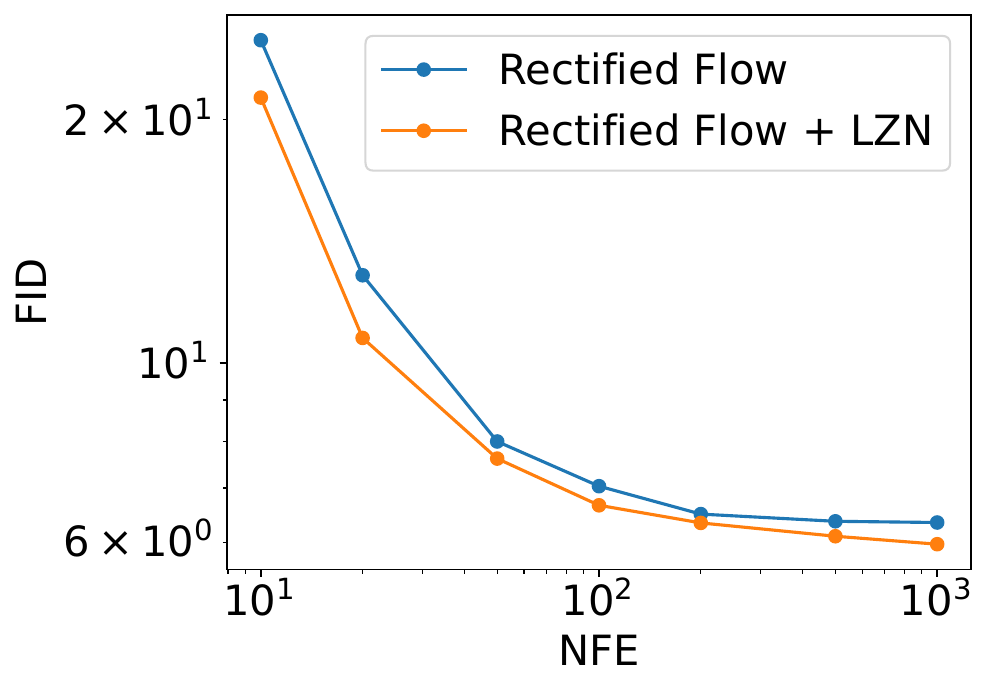}
\caption{\lsunbedroom{}.}
\label{fig:gen_fid_vs_nfe_lsunbedroom}
\end{subfigure}
\caption{FID vs. number of sampling steps in the Euler sampler. RF+\nameshort{} outperforms RF in most cases.}
\label{fig:gen_fid_vs_nfe}
\end{figure*}

\FloatBarrier
\section{More Details and Results on Case Study 2}
\label{app:repr}

\subsection{Algorithm Pseudocode}
\label{app:repr_pseudocode}
\begin{algorithm}[t]
    \DontPrintSemicolon
    \LinesNumbered
	\BlankLine
	\SetKwInOut{Input}{Input}
	\SetKwInOut{Output}{Output}
	\caption{Unsupervised representation learning with \nameshort{}}
    \label{alg:repr_training}
	\Input{Training set: $\sampleset$\\
 Encoder: $\encodernotation$\\
 Number of iterations: $T$\\
 Batch size: $B$
	}
	\BlankLine
        \For{iteration $\leftarrow 1,\ldots,T$}{
            $\sample_1,\ldots,\sample_B \leftarrow$ Draw samples from $\sampleset$\\
            $\sample'_i,\sample''_i\leftarrow$ Two random augmentations of $\sample_i$\\
            Training $\encodernotation$ using $\latentalign{\brc{\sample'_1,\ldots,\sample'_B},\brc{\sample''_1,\ldots,\sample''_B}}$
        }
\end{algorithm}

\cref{alg:repr_training} shows the pseudocode of the training process.

After training, the encoder $\encodernotation$ can be used to obtain image representations. We provide several strategies for extracting these representations. Please see \cref{app:repr_choice} for details.

\subsection{More Implementation Details}

\myparatightestn{Architecture.}
To remain consistent with prior work \cite{he2020momentum,chen2020simple,grill2020bootstrap}, we use the ResNet-50 architecture \cite{he2016deep} as the encoder for \nameshort{}. The only modification we make is replacing all batch normalization layers with group normalization. However, our early experiments indicate that this change does not lead to significant performance differences.

For the projection head following the ResNet-50 output, we use an MLP with one hidden layer, as in \cite{chen2020simple,chen2020big}.

\myparatightestn{Data augmentation.}
We follow the same data augmentation strategy as in \cite{chen2020big}.

\myparatightestn{Representation.}
Following prior work \cite{chen2020simple,chen2020big}, after training the ResNet-50, we discard the projection head and use only the ResNet-50 backbone to extract representations for training the linear classifier. As a result, obtaining representations from \nameshort{} in this way does not require going through the \nameshort{} latent computation process $\latentcomputationnotation$, and thus has the same computational efficiency as baseline methods.

\myparatightestn{Objective.} We use the version with log (\cref{eq:align_with_log}).

\subsection{More Experimental Settings} 
\myparatightestn{Datasets.} We use the \imagenet{} dataset, which contains 1281167 training images and 50000 validation images. \nameshort{} is trained on the training set, and classification accuracy is evaluated on the validation set.

\myparatightestn{Hyperparameters.} 
\begin{packeditemize}
    \item Batch size: 8192 %

    \item Optimizer: Adam
    \item Learning rate: 8e-4
    \item Gradient clipping: 1.0
    \item Latent dimension: 256
    \item Number of parameters: 24032832
    \item $\latentscale$: $0.45$
\end{packeditemize}

\myparatightestn{Computation cost.} 
Excluding the computation cost of periodic evaluation (i.e., only counting the computation cost of model training), each \nameshort{} experiment takes 1800 hours on 128 A100 (40 GB) GPUs.

\subsection{More Results}
\label{app:repr_choice}

\myparatightestn{More baselines.} \cref{tab:repr_full} shows the result with more baselines that are not using the ResNet-50 architecture.
\begin{table}[t]
    \centering
  \centering
    \caption{Classification accuracy on \imagenet{} by training a linear classifier on the unsupervised representations. Methods with \textsuperscript{\S} are based on contrastive learning.\protect\footnotemark{} The horizontal line separates baselines that perform worse or better than our \nameshort{}. ``R'' means ``ResNet''.
}
    \label{tab:repr_full}
    \resizebox{0.6\linewidth}{!}{
    \begin{tabular}{c|c|c|c}
    \toprule
       Algorithm  & Architecture & Top-1 Acc{\color{blue}$\uparrow$} &  Top-5 Acc{\color{blue}$\uparrow$} \\\midrule
       Colorization \cite{zhang2016colorful}                         & R101            & 39.6 \cite{he2020momentum}         & -\\
       Jigsaw \cite{noroozi2016unsupervised}                         & R50w2$\times$   & 44.6 \cite{he2020momentum}         & - \\
       Exemplar \cite{dosovitskiy2014discriminative}                 & R50w3$\times$   & 46.0 \cite{he2020momentum}         & - \\
       DeepCluster \cite{caron2018deep}                              & VGG             & 48.4 \cite{he2020momentum}         & - \\
       CPC v1\textsuperscript{\S} \cite{oord2018representation}      & R101            & 48.7 \cite{he2020momentum}         & - \\
       RelativePosition \cite{doersch2015unsupervised}               & R50w2$\times$   & 51.4 \cite{he2020momentum}         & - \\
       InstDisc\textsuperscript{\S} \cite{wu2018unsupervised}        & R50             & 54.0 \cite{he2020momentum}         & - \\
       Rotation \cite{gidaris2018unsupervised}                       & Rv50w4$\times$  & 55.4 \cite{he2020momentum}         & - \\
       BigBiGAN \cite{donahue2019large}                              & R50             & 56.6 \cite{he2020momentum}         & - \\
       LocalAgg\textsuperscript{\S} \cite{zhuang2019local}           & R50             & 58.8 \cite{he2020momentum}         & - \\
       MoCo\textsuperscript{\S} \cite{he2020momentum}                & R50             & 60.2 \cite{chen2020simple}         & - \\
       BigBiGAN \cite{donahue2019large}                              & Rv50w4$\times$  & 61.3 \cite{he2020momentum}         & 81.9 \cite{chen2020simple}\\
       PIRL\textsuperscript{\S} \cite{misra2020self}                 & R50             & 63.6 \cite{chen2020simple}         & - \\
       CPC v2\textsuperscript{\S} \cite{henaff2020data}              & R50             & 63.8 \cite{chen2020simple}         & 85.3 \cite{chen2020simple}\\%\cline{1-3}
       CMC\textsuperscript{\S} \cite{tian2020contrastive}            & R50             & 66.2 \cite{grill2020bootstrap}     & 87.0 \cite{grill2020bootstrap} \\%\cline{4-4}
        SimSiam\textsuperscript{\S}\cite{chen2021exploring} & R50 &  68.1 \cite{chen2021exploring} & - \\
       SimCLR\textsuperscript{\S} \cite{chen2020simple}              & R50             & 69.3 \cite{chen2020simple}         & 89.0 \cite{chen2020simple}\\\hline
       MoCo v2\textsuperscript{\S} \cite{chen2020improved}           & R50             & 71.7 \cite{chen2020improved}       & - \\
       SimCLR v2\textsuperscript{\S} \cite{chen2020big}              & R50             & 71.7 \cite{chen2020big}            & - \\
       BYOL\textsuperscript{\S} \cite{grill2020bootstrap}            & R50             & 74.3 \cite{grill2020bootstrap}     & 91.6 \cite{grill2020bootstrap}  \\
       DINO\textsuperscript{\S} \cite{caron2021emerging}  & R50           & 75.3 \cite{caron2021emerging}     & -  \\
       DINO\textsuperscript{\S} \cite{caron2021emerging}  & ViT-S           & 77.0 \cite{caron2021emerging}     & -  \\
       DINO\textsuperscript{\S} \cite{caron2021emerging}  & ViT-B/16           & 78.2 \cite{caron2021emerging}     & -  \\
       DINO\textsuperscript{\S} \cite{caron2021emerging}  & ViT-S/8           & 79.7 \cite{caron2021emerging}     & -  \\
       DINO\textsuperscript{\S} \cite{caron2021emerging}  & ViT-B/8           & 80.1 \cite{caron2021emerging}     & -\\
       I-JPEA \cite{assran2023self}  & ViT-B/16          & 72.9 \cite{assran2023self}     & -\\
       I-JPEA \cite{assran2023self}  & ViT-L/16          & 77.5 \cite{assran2023self}     & -\\
       I-JPEA \cite{assran2023self}  & ViT-H/14          & 79.3 \cite{assran2023self}     & -\\
       I-JPEA \cite{assran2023self}  & ViT-H/$16_{448}$          & 81.1 \cite{assran2023self}     & -\\\hline\hline
       \rowcolor{lightgray!60}
       \nameshort{}                                                  & R50             & 69.5 & 89.3\\\bottomrule
    \end{tabular}}
\end{table}
\footnotetext{Note that we use the term \emph{contrastive learning} broadly to refer not only to methods employing the traditional contrastive loss, but to all approaches that encourage relevant images to share similar representations; see \cref{sec:repr}.}

\begin{figure}[t]
    \centering
    \includegraphics[width=0.5\linewidth]{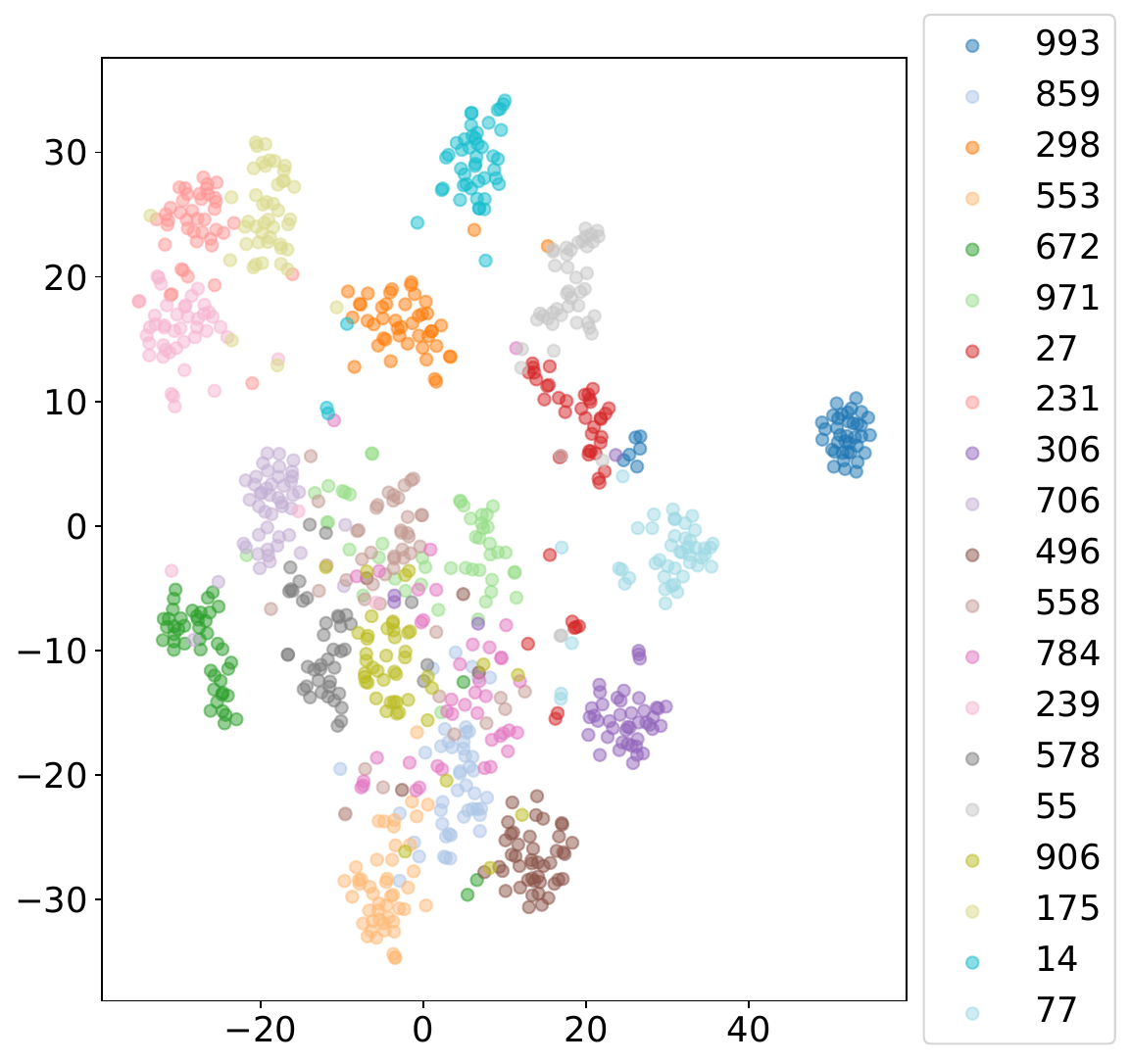}
    \caption{t-SNE visualization of \nameshort{} representations projected into 2D for 20 randomly selected ImageNet validation classes. Images from the same class form distinct clusters, indicating that \nameshort{} learns meaningful image representations.
  }
    \label{fig:tsne}
\end{figure}
\myparatightestn{Visualizing the learned representations.} 
We take images from randomly selected 20 classes from the validation set of ImageNet and computed their embeddings using the trained \nameshort{} model. We chose the validation set to ensure that the results are not influenced by training set overfitting. We then projected these embeddings into a 2D space using t-SNE--a widely used method for visualizing high-dimensional representations, following seminal works such as SimCLR \cite{chen2020simple}.
The resulting t-SNE plot is in \cref{fig:tsne}. We can see that the samples from different classes are well-clustered.  This suggests LZN learns meaningful image representations.

\myparatightestn{Ablation studies on representation choice.}
Prior work \cite{chen2020simple,chen2020big} has shown that the choice of feature extraction layer significantly affects downstream performance. In particular, removing the projection head often improves results. Motivated by this, we explore various feature extraction strategies for \nameshort{}, which offers more flexibility due to its unique latent computation process (\cref{sec:lzn_latent_computation}). Specifically, we compare the following methods:
\begin{packeditemize}
    \item \textbf{With latent.} Use the latent representation from \nameshort{} (see \cref{sec:lzn_latent_computation}) to train the classifier.
    \item \textbf{With latent ($\latentscale=0$).} Same as above but with $\latentscale=0$ in the latent computation.
    \item \textbf{With head.} Use the anchor point (i.e., encoder output before FM computation).
    \item \textbf{Without head.} Use the ResNet backbone output (before the projection head).
\end{packeditemize}
The first two methods are specific to \nameshort{}, while the last two follow the design commonly used in prior contrastive learning work \cite{chen2020simple,chen2020big}. 

The results are shown in \cref{fig:repr_ablation_head}. We observe the following:

\begin{packeditemize}
    \item \nameshort{} latent achieves the lowest prediction accuracy. As discussed in \cref{app:latent_computation_implementation}, the \nameshort{} latents are reliable only when computed over the full dataset. However, for efficiency, both training and inference rely on minibatches to approximate the latent representations. This approximation increases the size of the latent zones, leading to potential overlap between the zones of different samples across batches, which inevitably degrades downstream classification performance.

    \item In comparison, \nameshort{} with $\latentscale=0$ yields significantly higher accuracy. This improvement can be attributed to the reduced likelihood of overlap between latent zones when $\latentscale=0$, making the resulting representations more distinct and less noisy.

    \item The final two methods, ``with head'' and ``without head'', do not involve latent computation and are therefore more efficient. Consistent with findings from prior contrastive learning studies \cite{chen2020simple}, we observe that ``without head’’ performs substantially better. As explained in \cite{chen2020simple}, the projection head often discards important information—such as types of data augmentation—in order to minimize the training loss. In contrast, layers preceding the head might retain richer and more discriminative features, which are more useful for downstream classification tasks.
\end{packeditemize}

\begin{figure}[t]
    \centering
    \includegraphics[width=0.5\linewidth]{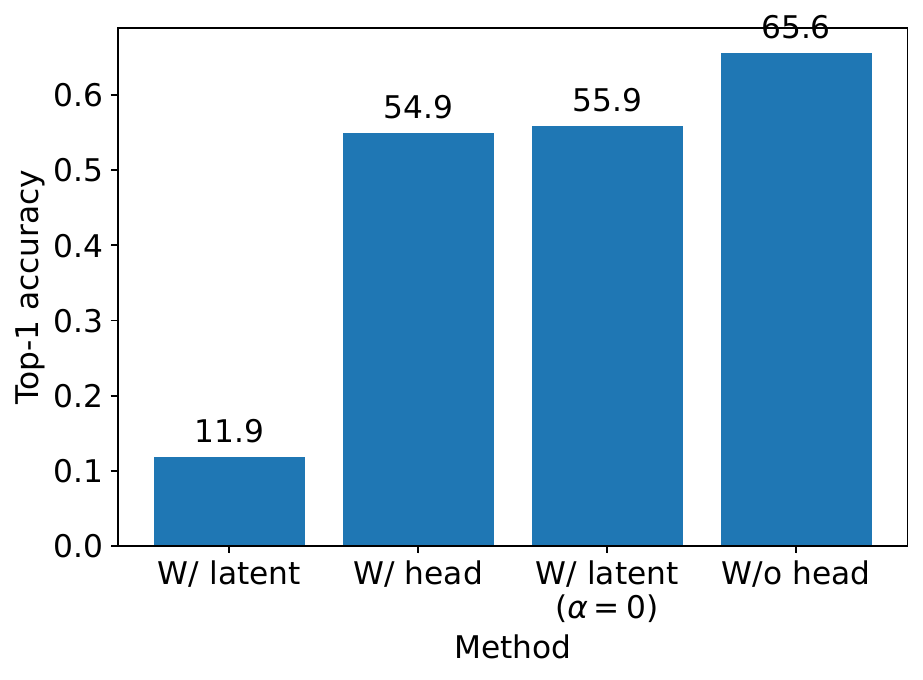}
    \caption{\nameshort{}'s linear classification accuracy with different feature extraction methods. Note that this experiment uses fewer iterations (1060000) than the main experiment (5000000) and omits data augmentation when training the linear classifier (used in the main experiment), so the accuracies are lower than the main experiment.
}
    \label{fig:repr_ablation_head}
\end{figure}

\myparatightestn{Ablation studies on the number of training steps.}
\cref{fig:repr_acc_iteration} shows classification accuracy over training iterations. Accuracy continues to improve rapidly at the end of training, suggesting that with more training, the gap between \nameshort{} and the SoTA could be further reduced.
\begin{figure}[t]
    \centering
    \includegraphics[width=0.5\linewidth]{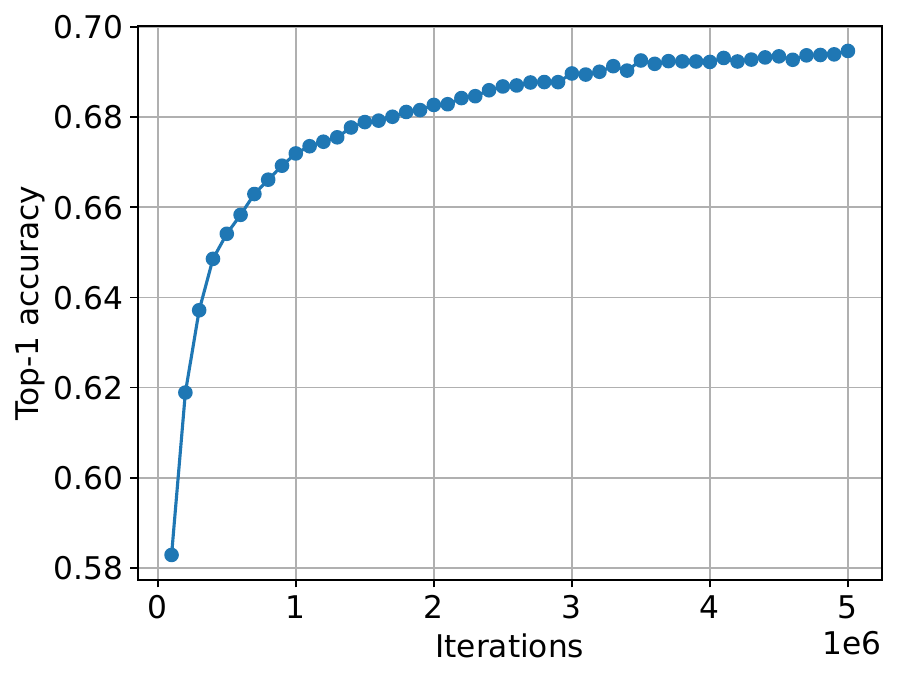}
    \caption{\nameshort{}'s linear classification accuracy vs. training iteration. The accuracy is still improving at a fast rate at the end of training. More training might further improve the result.}
    \label{fig:repr_acc_iteration}
\end{figure}

\FloatBarrier
\section{More Details and Results on Case Study 3}
\label{app:gen_and_class}

\subsection{Algorithm Pseudocode}
\label{app:gen_and_class_pseudocode}
\cref{alg:gen_and_class_training}, \cref{alg:gen_and_class_gen}, and \cref{alg:gen_and_class_class} show the algorithm pseudocode of the training, generation, and classification process.

\begin{algorithm}[t]
    \DontPrintSemicolon
    \LinesNumbered
	\BlankLine
	\SetKwInOut{Input}{Input}
	\SetKwInOut{Output}{Output}
	\caption{RF+\nameshort{} training (with class labels)}
        \label{alg:gen_and_class_training}
	\Input{Training set: labels $\sampleset$ and images $\sampleyset$\\
 Image decoder: $\decoderynotation$\\
 Image encoder: $\encoderynotation$ (used by $\latentcomputationnotation$)\\
 Label anchors: $A$ (used by $\latentalignnotation$)\\
 Number of iterations: $T$\\
 Batch size: $B$
	}
	\BlankLine
        \For{iteration $\leftarrow 1,\ldots,T$}{
            $\sampley_1,\ldots,\sampley_B \leftarrow$ Draw images from $\sampleyset$\\
            $\latent_1,\ldots,\latent_B \leftarrow \latentcomputation{\sampley_1,\ldots,\sampley_B}$\\
            $\epsilon_1,\ldots,\epsilon_B \leftarrow$ Gaussian noise\\
            $t_1,\ldots,t_B \leftarrow$ Random RF timesteps\\
            $\xi_i\leftarrow (1-t_i)\epsilon_i + t_i \sampley_i$\\
            Training using $\latentalign{\sampleset, \brc{\sampley_1,\ldots,\sampley_B}}$  and RF loss on $\decodery{\xi_i;\latent_i}$ (a weighted loss between the two)
        }
\end{algorithm}
\begin{figure}[t]
    \begin{minipage}{0.48\textwidth} %
\begin{algorithm}[H]
    \DontPrintSemicolon
    \LinesNumbered
	\BlankLine
	\SetKwInOut{Input}{Input}
	\SetKwInOut{Output}{Output}
	\caption{RF+\nameshort{} generation (unconditional)}
	\Input{Image decoder: $\decoderynotation$
	}
	\BlankLine
        $\xi\leftarrow$ Gaussian noise\\
        $\latent\leftarrow$ Gaussian noise\\
        Generated sample $\leftarrow \decodery{\xi;\latent}$
\end{algorithm}
    \end{minipage}
    \hfill %
    \begin{minipage}{0.48\textwidth}
\begin{algorithm}[H]
    \DontPrintSemicolon
    \LinesNumbered
	\BlankLine
	\SetKwInOut{Input}{Input}
	\SetKwInOut{Output}{Output}
	\caption{RF+\nameshort{} generation (unconditional)}
	\Input{Image decoder: $\decoderynotation$\\
                Label set: $c_1,\ldots,c_n$\\
                Class ID: $k$ (i.e., the class is $c_k$)\\
	}
	\BlankLine
        $\xi\leftarrow$ Gaussian noise\\
        \graybox{$\latent\leftarrow \latentcomputation{\brc{c_1,\ldots,c_n}}_k$}\\
        Generated sample $\leftarrow \decodery{\xi;\latent}$
\end{algorithm}
    \end{minipage}
    \caption{\textbf{The generation process of RF+\nameshort{} (with class labels).}  In this case, RF+\nameshort{} can simultaneously support unconditional and conditional generation.
\textbf{Left:} Unconditional generation, where the \nameshort{} latent is drawn from the prior Gaussian distribution, which is exactly the same as \cref{fig:alg_gen_gen}.  
\textbf{Right:} Conditional generation, where the \nameshort{} latent is drawn from the latent zone of the corresponding class. The changes on top of unconditional generation are highlighted in gray.
}
    \label{alg:gen_and_class_gen}
\end{figure}
\begin{algorithm}[t]
    \DontPrintSemicolon
    \LinesNumbered
	\BlankLine
	\SetKwInOut{Input}{Input}
	\SetKwInOut{Output}{Output}
	\caption{RF+\nameshort{} classification}
        \label{alg:gen_and_class_class}
	\Input{Image encoder: $\decodernotation$\\
            Label decoder: $\decodernotation$\\
            Images: $\sampley_1,\ldots,\sampley_B$
	}
	\BlankLine
        $\latent_1,\ldots,\latent_B \leftarrow \latentcomputation{\sampley_1,\ldots,\sampley_B}$\\
        $c_i\leftarrow \decoder{\latent_i}$
\end{algorithm}

\subsection{More Implementation Details}

\myparatightestn{Architecture.} 
\begin{packeditemize}
\item \textbf{Image decoder.}  
For RF, we modify the original architecture \cite{liu2022flow} to include a one-hot encoding of the class label as an additional input, concatenated with the timestep embedding.  
For RF+\nameshort{}, we apply the same modification on top of the architecture described in \cref{app:gen}.  
For unconditional generation, this one-hot encoding is deterministically derived from the \nameshort{} latent: given a \nameshort{} latent, we use the class label decoder to predict the class and then encode it as a one-hot vector.  
As a result, the decoder’s output remains fully determined by the \nameshort{} and RF latents, consistent with \cref{sec:gen}.
For conditional generation, this one-hot encoding is given as a condition, and \nameshort{} latents are sampled from the corresponding latent zone (as described in \cref{sec:lzn_latent_computation}).
    \item \textbf{Image encoder.} Same as that of \cref{app:gen}.
\end{packeditemize}

\myparatightestn{Label FM.}  
The method in \cref{sec:lzn_latent_alignment} implicitly assumes a uniform distribution over class labels.  
However, due to sampling randomness during training, each batch may have an imbalanced class distribution.  
To address this, we modify the $\pi_1$ distribution when computing $\rf{{\cdot}}$ to be a \emph{weighted} mixture of Dirac delta functions centered at $\encoder{\sample_i}$, with weights corresponding to the fraction of class $\sample_i$ samples in the batch.  
During testing, we revert to a uniform prior, as the true class distribution of the batch is not available.

\myparatightestn{Objective.} We use the version without log (\cref{eq:align_without_log}). We also tried the version with log (\cref{eq:align_with_log}) and did not observe a large difference in results.

\subsection{More Experimental Settings}
\myparatightestn{Datasets.} We use the \cifar{} dataset discussed in \cref{app:gen}.

\myparatightestn{Metrics.}
In addition to the metrics discussed in \cref{app:gen}, we evaluate on \cifar{} classification accuracy. The accuracy is evaluated on \cifar{} test set.

\myparatightestn{Sampler.} Same as \cref{app:gen}.

\myparatightestn{Hyperparameters.} 
\begin{packeditemize}
    \item \cifar{}
    \begin{packeditemize}
        \item RF:
        \begin{packeditemize}
            \item Batch size: 2000
            \item Optimizer: Adam
            \item Decoder learning rate: 0.002
            \item Gradient clipping: 1.0
            \item Number of parameters in decoder: 61809539
        \end{packeditemize}
        \item RF+\nameshort{}:
        \begin{packeditemize}
            \item Batch size: 2000
            \item Optimizer: Adam
            \item Decoder learning rate: 0.002
            \item Encoder learning rate: 0.00005
            \item Label learning rate: 0.0001
            \item Gradient clipping: 1.0
            \item Latent dimension: 200
            \item Number of parameters in decoder: 61911939
            \item Number of parameters in encoder: 49790260
        \end{packeditemize}
    \end{packeditemize}
    
\end{packeditemize}

\myparatightestn{Computation cost.} 
Excluding the computation cost of periodic evaluation (i.e., only counting the computation cost of model training), each RF+\nameshort{} experiment takes 31 hours on 16 A100 (40 GB) GPUs.

\subsection{More Results}

\myparatightestn{Generated images.} The generated images of RF and RF+\nameshort{} are in \cref{fig:gen_images_cond_lzn_cifar,fig:gen_images_cond_rf_cifar}.

\begin{figure}[t]
    \centering
    \includegraphics[width=1\linewidth]{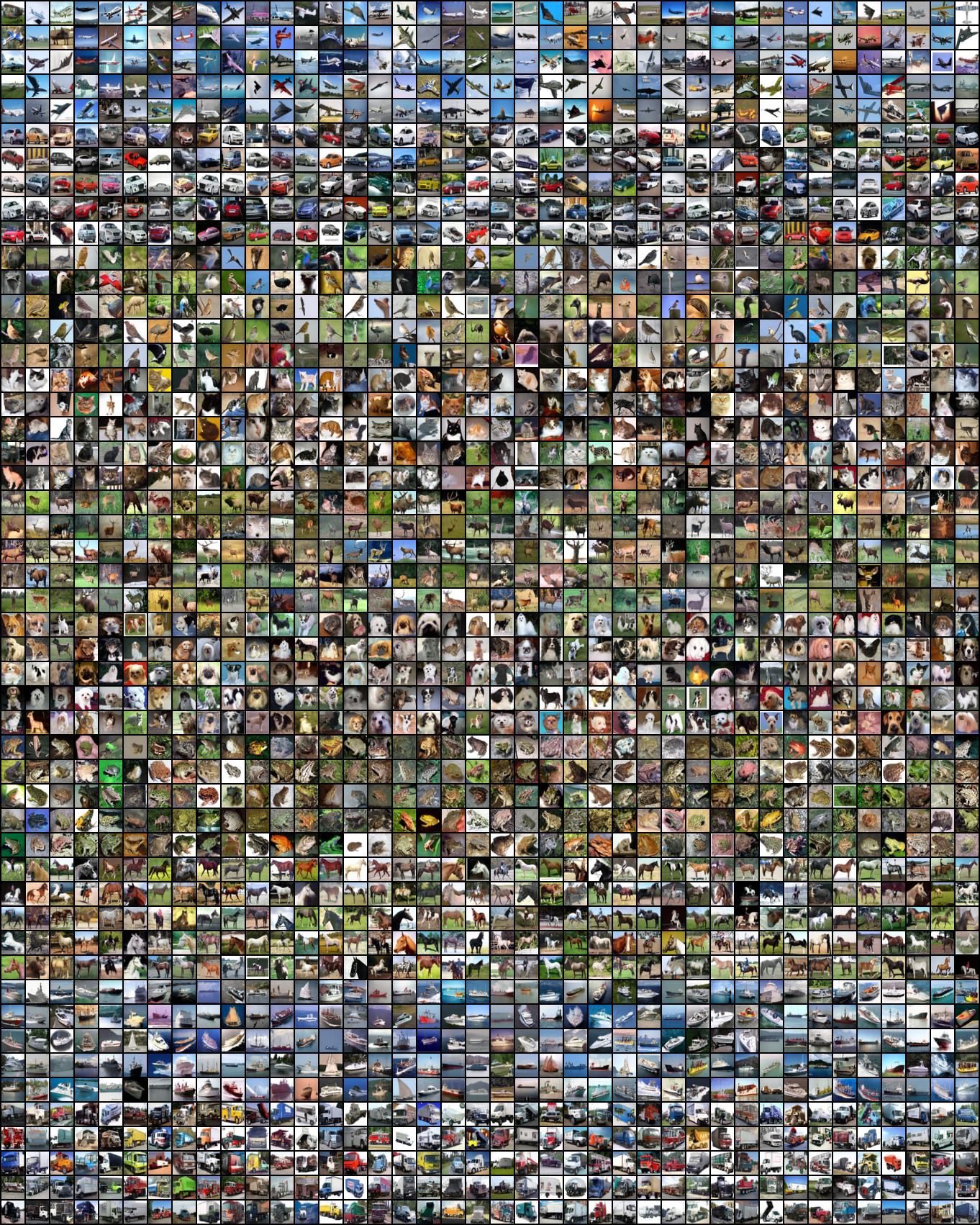}
    \caption{Generated images of RF on \cifar{} (conditional generation). Every 5 rows corresponds to one class in \cifar{}.}
    \label{fig:gen_images_cond_rf_cifar}
\end{figure}

\begin{figure}[t]
    \centering
    \includegraphics[width=1\linewidth]{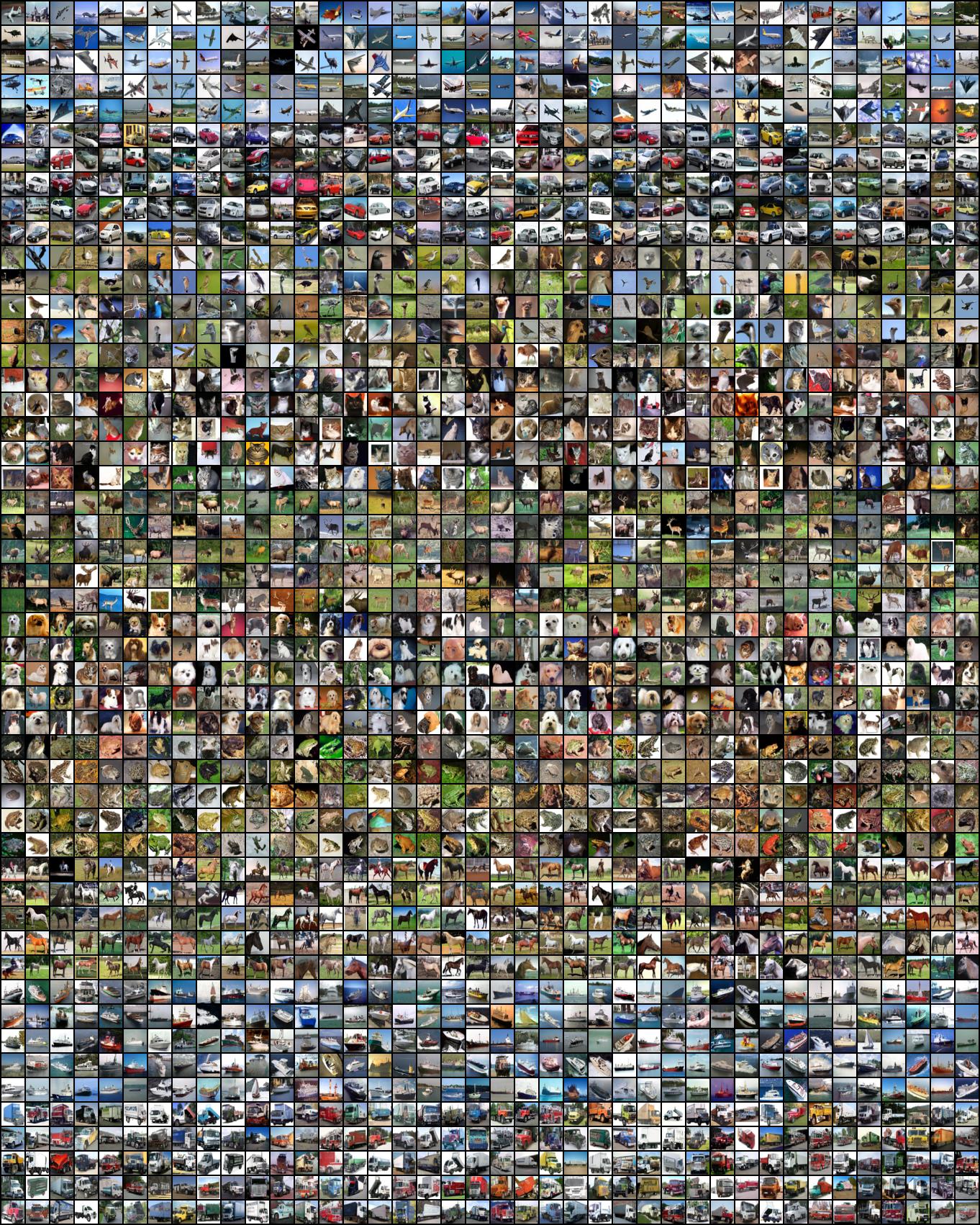}
    \caption{Generated images of RF+\nameshort{} on \cifar{}(conditional generation). Every 5 rows corresponds to one class in \cifar{}.}
    \label{fig:gen_images_cond_lzn_cifar}
\end{figure}

\myparatightestn{Ablation studies on FID implementation.}
Same as \cref{app:gen}, we present the FID scores using three different implementations in \cref{tab:gen_and_class_gen_fid}. We see that, while the numbers are different, the relative ranking across all three implementations is consistent. Especially, RF+\nameshort{} achieves the best FID in all implementations.

\begin{table*}[t]
\small
    \centering
    \caption{FID with different implementations for conditional image generation on \cifar{}. ``CM'' denotes consistency models \cite{song2023consistency}; ``RF'' denotes Rectified Flow \cite{liu2022flow}; ``clean'' denotes clean FID \cite{parmar2022aliased}. The best results are in \graybox{gray box}.}
    \vspace{-3mm}
    \label{tab:gen_and_class_gen_fid}
    \setlength{\tabcolsep}{3.8pt}
    \resizebox{0.5\textwidth}{!}{
    \begin{tabular}{l|ccc}
    \toprule
   Algo.
     & FID (clean){\color{red}$\downarrow$} & 
     FID (RF){\color{red}$\downarrow$} & FID (CM){\color{red}$\downarrow$} \\
    \hline
    RF &   2.85 & 2.50 & 2.47
  \\
    RF+\nameshort{}  &  \graybox{2.70} & \graybox{2.42} & \graybox{2.40} 
     \\
    \bottomrule
\end{tabular}
}
\vspace{-2mm}
\end{table*}

\myparatightestn{Ablation studies on sampling steps.}  
Following the experimental settings in \cref{app:gen}, we use the Euler sampler with varying numbers of sampling steps. As shown in \cref{fig:gen_fid_vs_nfe_cond_cifar}, RF+\nameshort{} generally achieves better FID than the RF baseline across most settings.

\begin{figure*}[!t]
\centering
\includegraphics[width=0.5\textwidth]{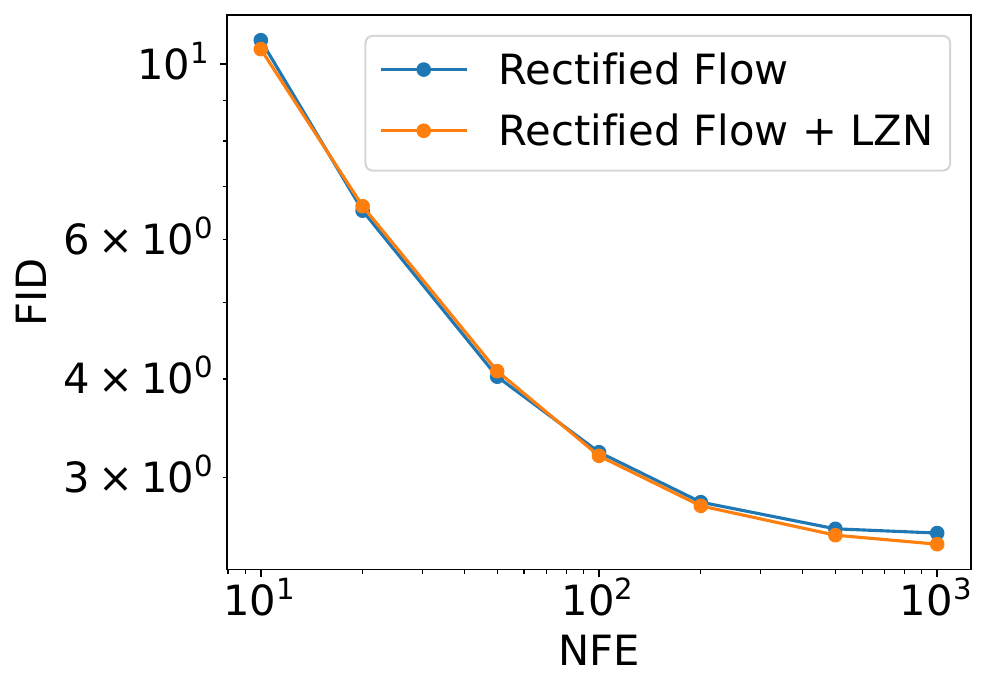}
\caption{FID vs. number of sampling steps in the Euler sampler on \cifar{} (conditional generation). RF+\nameshort{} outperforms RF in most cases.}
\label{fig:gen_fid_vs_nfe_cond_cifar}
\end{figure*}

\myparatightestn{Ablation studies on classification techniques.}
Here, we discuss several techniques for improving the classification results.
\begin{packeditemize}
    \item Recall that latent computation (\cref{eq:latent_computation}) includes randomness from $\epsilon_i$ because each sample corresponds to a latent \emph{zone}, not a single point. Empirically, for classification tasks, using the “center” of the latent zone yields better performance. Concretely, we set $\latentscale=0$ in \cref{eq:latentscale} when computing latents. This is intuitive, as the center is likely farther from zone boundaries and better represents the sample.
    \item \cref{app:latent_computation_efficiency} discusses that during training, we use a batch of samples rather than all samples to estimate latents for efficiency. However, during inference, where gradient computation is unnecessary and thus the overhead of large batch sizes is less critical, we can use a larger batch size to improve performance.
\end{packeditemize}

\cref{fig:gen_and_class_class_ablation} shows that increasing the batch size and decreasing $\latentscale$ improve the classification accuracy. The best setting improves the default setting (batch size$=2000$ and $\latentscale=1.0$) by 2.9\%.

\begin{figure}
    \centering
    \includegraphics[width=0.5\linewidth]{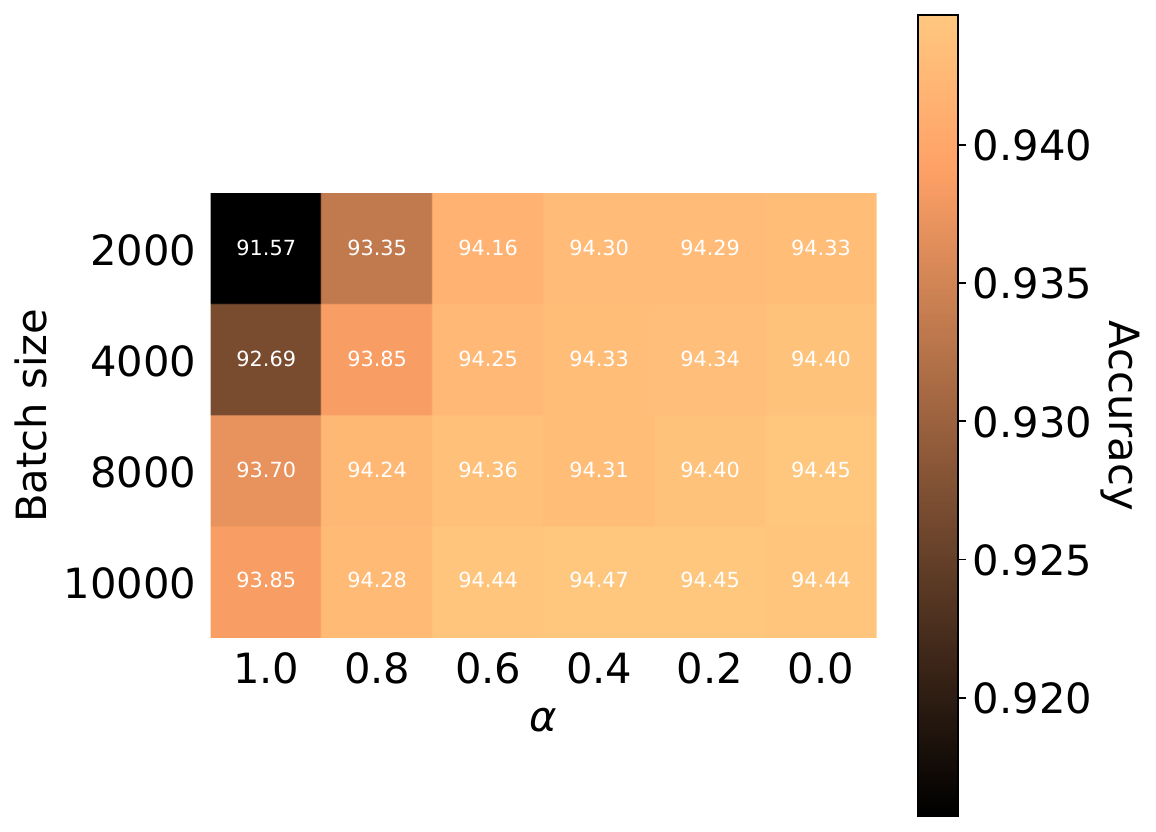}
    \caption{Classification accuracy with different hyperparameters on \cifar{}. Generally, increasing the batch size and decreasing $\latentscale$ improve the classification accuracy. }
    \label{fig:gen_and_class_class_ablation}
\end{figure}

\myparatightestn{Ablation studies on latent alignment hyperparameter.}
In latent alignment (\cref{sec:lzn_latent_alignment}), we introduced a hyperparameter $\solverstepsstart$ that controls how many time steps are excluded from the latent alignment objective. In our main experiments, we set $\solverstepsstart=20$ (out of a total of 100 steps). Here, we conduct an ablation study by reducing $\solverstepsstart$ to 5 (i.e., $4\times$ smaller). The results are shown in \cref{tab:gen_and_class_ablation}. We can see that $\solverstepsstart$ does not affect the results much, and the performance remains better than the baseline RF across most metrics. This is expected. Unlike common hyperparameters (such as loss weights) that influence the optimal solution, $\solverstepsstart$ does not alter the optimal solution, which is the perfect alignment between two latent zones. Instead, this parameter is introduced solely to help avoid getting stuck in local optima (\cref{sec:lzn_latent_alignment}). We expect that any small but non-zero value of $\solverstepsstart$ should be sufficient in practice.

\begin{table*}[t]
\small
    \centering
    \caption{Conditional image generation quality and classification accuracy on \cifar{}. The best results are in \graybox{gray box}. The hyperparameter $\solverstepsstart$ does not impact the results much. %
    }
    \label{tab:gen_and_class_ablation}
    \setlength{\tabcolsep}{3.8pt}
    \begin{tabular}{l|ccccccc}
    \toprule
   Algo.
     & FID{\color{red}$\downarrow$} & 
     sFID{\color{red}$\downarrow$} & IS{\color{blue}$\uparrow$} & Precision{\color{blue}$\uparrow$} & Recall{\color{blue}$\uparrow$} & Recon{\color{red}$\downarrow$} & Accuracy{\color{blue}$\uparrow$}\\
    \hline
    RF &   2.47 & 4.05 & 9.77 & \graybox{0.71} & \graybox{0.58} & 0.69 & -
  \\
    RF+\nameshort{} ($\solverstepsstart=20$) &  {2.40} & \graybox{3.99} & \graybox{9.88} & \graybox{0.71} & \graybox{0.58} & {0.38} & \graybox{94.47}
     \\
     RF+\nameshort{} ($\solverstepsstart=5$) &  \graybox{2.39} & \graybox{3.99} & {9.76} & \graybox{0.71} & \graybox{0.58} & \graybox{0.36} & 94.42\\
    \bottomrule
\end{tabular}
\end{table*}
\FloatBarrier
\section{Extended Discussions on Related Work}
\label{app:related_work}

\cite{dao2023flow} proposes to conduct flow matching in the latent space. However, it has quite different goals and techniques from \nameshort{}.
\begin{packeditemize}
    \item \textbf{Goals.} The goal of \cite{dao2023flow} is to improve \emph{generation tasks}. In contrast, our goal is more ambitious: to develop \emph{a unified framework that supports generation, representation learning, and classification}. This broader scope requires a different design philosophy and technical approach, as detailed next.
    \item \textbf{Techniques.} 
    \cite{dao2023flow} applies flow matching to the \emph{latent space of a pre-trained Stable Diffusion autoencoder}, which is reasonable when focusing solely on generation. However, such a latent space is \emph{high-dimensional} and retains \emph{spatial structure}, limiting its suitability for classification and compact representation learning.
    To support our broader objectives, we introduce several novel techniques:
    \begin{packeditemize}
        \item Match a \emph{discrete} distribution (i.e., the anchors) to a \emph{continuous one}, as opposed to a continuous-to-continuous distribution matching in \cite{dao2023flow}.
        \item Use an \emph{adaptive latent space}, since our encoder and decoder are trained end-to-end, as opposed to using a fixed pre-trained autoencoder and fixed latent space in \cite{dao2023flow}.
        \item \emph{Numerically solve} the flow directly, as opposed to training an additional model to learn the flow in \cite{dao2023flow}.
        \item \emph{Latent alignment} between different data types (e.g., image and label), which is new in our paper.
    \end{packeditemize}

\end{packeditemize}
\FloatBarrier
\section{Extended Discussions on Limitations and Future Work}
\label{app:discussions}

\myparatightestn{Inference efficiency. } It is important to note that while the training cost of \nameshort{} might be high, \emph{at inference time, LZN is often as efficient as existing approaches.}
\begin{packeditemize}
    \item For \emph{image generation} (\cref{sec:gen,sec:gen_and_class}), we do not need to compute the latent during inference. Instead, latents are sampled from the Gaussian prior and passed directly to the decoder, making the generation speed comparable to the base model. 
    \item For \emph{representation learning} (\cref{sec:repr}), we find that dropping the final encoder layers during inference improves performance (\cref{app:repr_choice}), similar to the observation in prior contrastive learning methods \cite{chen2020simple}. In this case, inference involves simply passing an image through the encoder \emph{without} the latent computation process (\cref{sec:lzn_latent_computation}), just like in traditional contrastive learning methods.
\end{packeditemize}

\myparatightestn{Training efficiency. } The main training bottleneck stems from the quadratic cost with respect to the batch size. Notably, this is also the case for many contrastive learning methods, including the seminal works MoCo \cite{he2020momentum} and SimCLR \cite{chen2020simple}, which compute pairwise similarities between all examples in a batch.

\myparatightestn{The parallel between LLM training and \nameshort{} training.} We observe an interesting parallel between the training of LLMs and \nameshort{}. Specifically, in LLM training, computing attention weights requires $\bigO\bra{c^2 d v}$, where $c$ is the context length, $d$ is the attention dimension, and $v$ is the number of layers. In \nameshort{}, computing the latents (\cref{sec:lzn_latent_computation}) requires $\bigO(n^2 \latentdim \solversteps)$, where $n$ is the number of samples in a batch, $\latentdim$ is the latent dimension, and $\solversteps$ is the number of solver steps. Several parallels emerge:
\begin{packeditemize}
    \item Context length in LLMs ($c$) $\leftrightarrow$ Number of samples in \nameshort{} ($n$)
    \item Attention dimension in LLMs ($d$) $\leftrightarrow$ Latent dimension in \nameshort{} ($\latentdim$)
    \item Number of layers in LLMs ($v$) $\leftrightarrow$ Number of solver steps in \nameshort{} ($\solversteps$)
\end{packeditemize}
Not only do these parameter pairs affect the time complexity in similar ways, but their computation flows are also analogous: in LLMs, the pairwise inner product of token features is computed to derive attention weights, and these weights are computed sequentially across layers. Similarly, in \nameshort{}, the pairwise distances between intermediate anchor points of samples are computed to derive velocity, and this velocity is updated sequentially across solver steps.

While LLM training is known to be computationally expensive, recent advances have significantly improved its efficiency. Given the structural similarities, we expect that such advances in LLM training could be adapted to enhance the training efficiency of \nameshort{} as well.

\myparatightestn{Using \nameshort{} solely to implement generative modeling.} In theory, \nameshort{} can be used solely for generative modeling. By construction (\cref{sec:gen}), if the decoder is trained to map latents to the corresponding data perfectly, then the generative distribution of \nameshort{} is exactly $\frac{1}{n} \sum_{i=1}^{n} \delta(s - \sample_i)$, i.e., the empirical distribution of the training set. We explored this approach in our early experiments. It performs well on simple datasets such as \mnist{} \cite{deng2012mnist}, but generates blurry images on more complex datasets such as \cifar{}. 
We hypothesize that this may be due to the minibatch approximation (\cref{app:latent_computation_efficiency}), which can break the disjoint latent property, and/or the strict requirement that latent zones have no gaps between them. %
We leave a deeper exploration of this direction to future work.

\myparatightestn{Societal impacts.} Since \nameshort{} can be used to improve ML models, it has the potential for both beneficial and harmful applications. Positive use cases include creative content generation and improved information retrieval, while negative applications may involve the creation of fake or misleading content.

\FloatBarrier

\end{document}